\documentclass[10pt]{article} 
\usepackage[accepted]{tmlr}

\usepackage{amsmath,amsfonts,bm}

\def\eqref#1{equation~\ref{#1}}

\def\1{\bm{1}}

\def\mH{{\bm{H}}}

\DeclareMathAlphabet{\mathsfit}{\encodingdefault}{\sfdefault}{m}{sl}
\SetMathAlphabet{\mathsfit}{bold}{\encodingdefault}{\sfdefault}{bx}{n}

\DeclareMathOperator*{\argmax}{arg\,max}

\usepackage{hyperref}
\usepackage{url}
\usepackage{dirtytalk}
\usepackage{amsthm}
\newtheorem{lemma}{Lemma}
\newtheorem{theorem}{Theorem}
\newtheorem{corollary}{Corollary}
\usepackage{graphicx}
\usepackage{physics}
\usepackage{comment}
\usepackage{enumitem}
\usepackage{bbding}
\usepackage{booktabs}
\usepackage{multirow}
\title{
Self-Supervised Laplace Approximation for Bayesian \\ Uncertainty Quantification
}

\newcommand{\deceasedmark}{\textsuperscript{\ensuremath{\dagger}}}
\author{\name Julian Rodemann \email julian.rodemann@cispa.de \\
      \addr Rational Intelligence Lab, CISPA Helmholtz Center for Information Security\\
      Department of Statistics, LMU Munich
      \AND
      \name Alexander Marquard \email alexander.marquard@campus.lmu.de\\ 
\name Thomas Augustin\deceasedmark \email thomas@stat.uni-muenchen.de \\      \addr Department of Statistics, LMU Munich
            \AND
      \name Michele Caprio \email michele.caprio@manchester.ac.uk \\
      \addr Department of Computer Science, The University of Manchester
      }

\def\month{03}  
\def\year{2026} 
\def\openreview{\url{https://openreview.net/forum?id=T8w8L2t3JG}} 

\usepackage{algorithm}
\usepackage{algpseudocode}
\algrenewcommand\algorithmicrequire{\textbf{Input:}}
\algrenewcommand\algorithmicensure{\textbf{Output:}}

\begin{document}
\maketitle
\begin{center}
\small
\deceasedmark\, \textit{In loving memory of Thomas Augustin: }Thomas Augustin passed away after the submission of this manuscript to TMLR and before it was accepted for publication. The remaining authors dedicate this
work to his memory.
\end{center}

\begin{abstract}
Approximate Bayesian inference typically revolves around computing the posterior parameter distribution. In practice, however, the main object of interest is often a model’s \emph{predictions} rather than its parameters. In this work, we propose to bypass the parameter posterior and focus directly on approximating the posterior predictive distribution.
We achieve this by drawing inspiration from self-training within self-supervised and semi-supervised learning. Essentially, we quantify a Bayesian model's predictive uncertainty by refitting on self-predicted data. The idea is strikingly simple: If a model assigns high likelihood to self-predicted data, these predictions are of low uncertainty, and vice versa. This yields a deterministic, sampling-free approximation of the posterior predictive. The modular structure of our Self-Supervised Laplace Approximation (SSLA) further allows us to plug in different prior specifications, enabling classical Bayesian sensitivity (w.r.t. prior choice) analysis. In order to bypass expensive refitting, we further introduce an approximate version of SSLA, called ASSLA. We study (A)SSLA both theoretically and empirically in regression models ranging from Bayesian linear models to Bayesian neural networks. Across a wide array of regression tasks with simulated and real-world datasets, our methods outperform classical Laplace approximations in predictive calibration while remaining computationally efficient.

\end{abstract}

\section{Introduction}\label{intro}
Despite all the merits of Bayesian methods, one of their notorious shortcomings is the fact that in many applications the posterior turns out to be computationally intractable. Let $\Theta$ denote the parameter space of interest and $\mathcal{S}$ denote the sample space. Let $\theta$ denote a generic element of $\Theta$, and $D$ be a collection of elements of $\mathcal{S}$ corresponding to the collected evidence. By Bayes' theorem, we have that
\begin{equation}\label{bayes-thm}
    p(\theta \mid D)=\frac{p(D\mid \theta)\pi(\theta)}{p(D)}=\frac{p(D\mid \theta)\pi(\theta)}{\int_\Theta p(D\mid \theta)\pi(\theta) \text{d}\theta}, 
\end{equation}
where for maximum generality we assumed $\Theta$ and $\mathcal{S}$ to be uncountable, so that $p(\theta \mid D)$ is the probability density function (pdf) of posterior $P(\cdot \mid D)\equiv P_D$, $p(D\mid \theta)$ is the pdf of likelihood $P_\theta$, and $\pi(\theta)$ is the pdf of prior $\Pi$.\footnote{Notice that $p(D)$ is the pdf of the \textit{marginal likelihood}.} Oftentimes, the denominator in \eqref{bayes-thm} can only be solved numerically. As a consequence, to obtain the posterior, the scholar needs to resort to approximation techniques such as Variational Inference (VI), Markov Chain Monte Carlo (MCMC) or Laplace approximations, see Section \ref{sec:rel-work}. 

Usually, these approximations require a large computational overhead, which makes Bayesian techniques slower than their frequentist counterparts. This is especially true in Machine Learning (ML) and Artificial Intelligence (AI) applications, where Bayesian Neural Networks (BNNs) are much slower to train than classical NNs. In this work, we propose a technique to approximate the posterior predictive $p(\hat{D} \mid \theta, D)$, where $\hat{D}$ denotes unseen test data, that allows us to overcome this type of shortcoming. Loosely inspired by recent work on martingale posteriors \citep{fong2023martingale,lee2023martingale,moya2025martingale}, we forego explicit calculation of the (parameter) posterior and focus on the posterior predictive distribution. The posterior predictive is a crucial quantity in practical Bayesian inference: It informs the scholar about the distribution of the predictions resulting from the posterior distribution of $\theta$. It constitutes one major, if not \textit{the main} advantage of Bayesian inference over frequentist procedures. The latter lack inherent methods of quantifying predictive uncertainty, because $\theta$ is not treated as a random variable. 

Our approximation is based on the well-known Laplace approximation method; this means that the posterior predictive is approximated by an unnormalized Gaussian distribution. We note in passing that this is not necessarily a severely restrictive assumption: For example, it is customary in Variational Inference (VI) to approximate the true posterior with its closest (according to the KL divergence) Gaussian distribution.

The proposed self-supervised Laplace approximation is based on a simple, yet far-reaching insight from reciprocal learning \citep{rodemann2024reciprocal,rodemann2025generalization}, or more specifically, from self- and semi-supervised learning \citep{chapelle2006semi,zhu2009introduction,jing2020self}: We can learn something from refitting on self-predicted data. Contrary to self- and semi-supervised learning, however, we do not aim at increasing predictive accuracy, but approximate the predictions' uncertainty.

Augmenting a likelihood with an additional (pseudo-)observation is a standard Bayesian device under conditional independence assumptions.
Our contribution is \emph{not} the use of pseudo-observations per se, but the resulting \emph{closed-form Laplace approximation of the posterior predictive} evaluated at the model’s self-prediction.
Concretely, we derive a tractable approximation of $\log p(\hat y_{n+1}\mid x_{n+1},D)$ that decomposes into a difference of three interpretable increments: (i) a log-likelihood increment, (ii) a log-prior increment, and (iii) a curvature (log-determinant) increment.
This decomposition yields a practical predictive \emph{sensitivity signal}: the larger the change induced by $\hat y_{n+1}$, the higher the predictive uncertainty assigned to that prediction.
To avoid ambiguity with broad uses of the expression ``self-supervised learning'', we interpret the construction as a self-training based Laplace approximation and refer to the method accordingly.

In a nutshell, we propose to quantify uncertainty of predictions by refitting on these very predictions. Concretely, we investigate how a model's likelihood/loss changes if we \textit{ceteris paribus} refit the model on the enhanced data set. Intuitively, the lower the model's likelihood of self-predicted data, the higher the model's predictive uncertainty.

The remainder of this article is structured as follows. In Section 2, we briefly review key methods in approximate Bayesian inference. Section 3 introduces our Self-Supervised Laplace Approximation (SSLA) method, detailing the theoretical foundations. All proofs are relegated to Appendix A. Section 4 presents extensive experiments: First, we validate our approximation on classical conjugate-prior models for regression problems, then we demonstrate the method’s performance in heteroscedastic regression tasks using neural networks, comparing against established Bayesian inference approaches. We conclude with evaluations on various real-world datasets, underlining practical applicability and robustness of our approach in Section 5. Section 6 concludes the paper, summarizing findings and potential directions for further research.

\section{Background and Related Work}
\label{sec:rel-work}
Approximate Bayesian Inference (ABI) tackles the practical issue of computing posterior distributions that are typically analytically intractable. In particular, the central tools in Bayesian Deep Learning (BDL), viz. Bayesian Neural Networks (BNNs), often face significant computational challenges, necessitating efficient approximations. This section briefly discusses BNNs, and reviews key approximation techniques, including Variational Inference, Markov Chain Monte Carlo (MCMC), and Laplace.

\textbf{Bayesian Neural Networks.}
BNNs incorporate Bayesian inference into neural network training by modeling the network weights probabilistically. Prominent approaches involve explicitly setting priors on weights \citep{blundell2015} or implicitly via dropout techniques \citep{gal2016}, interpreting dropout as a form of approximate Bayesian inference. Techniques like Stochastic Gradient Langevin Dynamics \citep{welling2011} provide scalable inference options. 

\textbf{Variational Inference.}
Variational Inference (VI) approximates the posterior distribution through optimization, minimizing the KL divergence between the true posterior and a set of ``simpler'' distributions \citep{blei2017variational}. Classical methods include mean-field variational Bayes (MFVB) \citep{beal2003} and importance-weighted autoencoders (IWAE) \citep{mnih2014}. Deep generative models, notably Variational Autoencoders (VAEs) \citep{kingma2014}, extended these frameworks, incorporating hierarchical modeling \citep{higgins2017} and amortized inference \citep{kingma2014}. Techniques such as stochastic variational inference (SVI) \citep{beal2000} and black-box variational inference (BBVI) \citep{ranganath2014} improve scalability but struggle to keep pace with increasingly complex architectures. \citet{pmlr-v235-ortega24a} merge variational inference and Laplace approximations by a variational sparse Gaussian Process approach, enabling sub-linear training time.

\textbf{Markov Chain Monte Carlo (MCMC).}
MCMC techniques provide theoretically exact posterior sampling by constructing Markov chains whose stationary distributions represent the (true) posterior. Methods such as Hamiltonian Monte Carlo \citep{neal1993} and the No-U-Turn Sampler (NUTS) \citep{hoffman2014} are widely employed. However, computational demands limit their practicality in large-scale scenarios \citep{papamarkou2022challenges}, partly addressed by \citet{Wiese2023TowardsEfficientMCMC,Sommer2024ConnectingTheDots,Sommer2025MicrocanonicalLangevinEnsembles}.

\textbf{Laplace Approximation.}
Laplace approximations efficiently approximate posterior distributions with Gaussian distributions centered at the posterior mode, leveraging the posterior’s local curvature. Recent advances, such as Kronecker-factored Hessian approximations \citep{eschenhagen2023kronecker}, improve computational efficiency significantly, see also \cite{antoran22a, bouchiat2023linearized,daxberger2021laplace, daxberger2021bayesian, eschenhagen2023kronecker,immer2021improving,cinquin2024fsp}. Integrated Nested Laplace Approximations (INLA) \citep{rue2009approximate,martino2019integrated} extend these methods to handle multimodal posterior landscapes more robustly.

The basis for Laplace approximation is Laplace's \textit{method}, introduced by Pierre-Simon de Laplace in 1774 \citep{laplace1986memoires}. It serves as a tool for approximating integrals of the form 
\begin{equation}
    \int _{a}^{b}e^{C f(x)}\,\text{d}x, 
\end{equation}
where $f(x)$ is a twice-differentiable function, $C > 0$ is a constant, and the boundaries $a$ and $b$ may potentially extend to infinity. In the context of Bayesian statistics, Laplace approximation usually denotes either the estimation of the normalizing constant $\int_\Theta p(D\mid \theta)\pi(\theta) \text{d}\theta$ (marginal likelihood) \citep{llorente2020marginal} using Laplace's method, or the approximation of the posterior distribution $p(\theta \mid D)$ \citep{tierney1986accurate} through a Gaussian centered at the maximum a posteriori estimate, see \citet{laplace1986memoires, schwarz1978estimating, tierney1986accurate} for initial works and \citet{konishi2008information,llorente2020marginal,turkman2019computational} for modern textbook proofs of the approximation's properties. The Laplace approximation $\Breve{p}_L(\theta \mid D)$ of $p(\theta \mid D)$ builds on a second-order Taylor approximation of the likelihood, which then yields a Gaussian integral \citep{gauss1877theoria}, whose solution gives

\begin{equation}
    p(\theta \mid D) \approx \Breve{p}_L(\theta \mid D) := (2\pi)^{-q/2}\,\big|H_{\text{post}}(\hat\theta_{\text{MAP}})\big|^{1/2}\,
\exp\!\left(-\tfrac12(\theta-\hat\theta_{\text{MAP}})^\top H_{\text{post}}(\hat\theta_{\text{MAP}})(\theta-\hat\theta_{\text{MAP}})\right),
\end{equation}
where $q$ is the dimension of $\Theta$, $\hat\theta_{\text{MAP}}$ the MAP estimator $
\hat\theta_{\text{MAP}} := \arg\max_{\theta}\big\{\ell_D(\theta)+\log\pi(\theta)\big\},
$
and $
H_{\text{post}}(\hat\theta_{\text{MAP}}) := -\nabla_\theta^2\big(\ell_D(\theta)+\log\pi(\theta)\big)\Big|_{\theta=\hat\theta_{\text{MAP}}}
$ the negative Hessian of the log-posterior at $\hat\theta_{\text{MAP}}$. In density form, $
p(\theta\mid D)\ \approx\ \mathcal N\!\big(\theta;\ \hat\theta_{\text{MAP}},\ H_{\text{post}}(\hat\theta_{\text{MAP}})^{-1}\big).
$ 
The computation of the Hessian was long considered (and to some degree, still is) a computational bottleneck in deploying Laplace approximations. A direct computation grows quadratically in $d$. The Gauss-Newton approximation has long been the most popular way to circumvent such computational hurdle  \citep{ritter2018scalable, ritter2018online, kristiadi2020being, lee2020estimating, immer2021improving}. Recently, however, the Kronecker-factored approximate curvature (KFAC) has attracted increasing attention due to its scalability and simplicity \citep{eschenhagen2023kronecker}. KFAC relies on  block-diagonal factorization of the Hessian \citep{martens2015optimizing}.     

In this work, we shift the focus to the posterior predictive distribution 
\begin{equation}
    p( \hat y_{n+1} \mid x_{n+1}, D)  =  \int_\Theta  p( \hat y_{n+1} \mid x_{n+1}, \theta) \; p(\theta \mid D) \; \text{d}\theta,
\end{equation}
 where $x_{n+1}$ is a new input (feature) and $\hat y_{n+1}$ is the predicted output (target) with $n$ the cardinality of the collected evidence $D$. While harder to approximate (at least at first sight, see Section \ref{main-section}), the posterior predictive is the most relevant quantity for practical prediction problems addressed via a Bayesian methodology.


It provides the user with direct information on the uncertainty of the predictions, rather than on the parameter space.    
The \textit{modus operandi} in deploying Laplace approximations in BNNs is to approximate the posterior $p(\theta \mid D)$ by some Laplace approximation $\Breve{p}_L(\theta \mid D)$ and then use plug-in MC-samples from 

\begin{equation}
    \int_\Theta  p( \hat y_{n+1} \mid x_{n+1}, \theta) \; \Breve{p}_L(\theta \mid D) \; \text{d}\theta
\end{equation}
to approximate the posterior predictive distribution \citep{NoML22, kristiadi2022posterior}. The necessity of MC-sampling drastically increases the computational overhead. \cite{daxberger2021laplace} summarize and implement a few alternatives ranging from probit approximation \citep{spiegelhalter1990sequential} for binary classification to extended probit \citep{gibbs1998bayesian} or approximating the softmax-Gaussian integral via a Dirichlet distribution \citep{hobbhahn2022fast} for general classification. 


\textbf{This is the regime we address in this work:} we derive a direct Laplace approximation of the posterior predictive that avoids Monte Carlo integration while remaining applicable to generic deep regression models. We position (A)SSLA as a complementary approach to reduce predictive sampling cost, while acknowledging that specialized alternatives exist in particular regimes (e.g., moment-propagation methods, deterministic/low-variance VI variants, and Bayesian last-layer models).

%
In addition, our method is modular with respect to the prior specification. That is, it can incorporate different priors directly in the approximation. This is especially important in the newly-introduced field of Credal Bayesian Deep Learning (CBDL, \citet{caprio_IBNN}) building on the generalized Bayes rule by \citet{walley1991statistical}, see also \citet{walter2009imprecision,rodemanninterpreting}. There, finite collections of priors and of likelihoods are combined pairwise (thus inducing a combinatorially expensive problem to solve) and VI-approximated to derive a finite set of posteriors, and in turn a finite set of posterior predictives. The convex hull of the latter forms a credal set, i.e. a closed and convex set of probabilities, a central concept in Imprecise Probability theory \citep{walley1991statistical,augustin2014introduction,TroffaesDeCooman2014}.\footnote{For more modern references, see e.g. \citet{ergo.th,constriction,lu2024ibclzeroshotmodelgeneration,caprio2025conformalized,inn,caprio2025optimaltransportepsiloncontaminatedcredal,rodemann2021accounting,prior-robust-BO,rodemann2024imprecise,jansen2022statistical,jansen2022quantifying,jansen_uai2023,jansen-gsd-24, jansen2025contributions, bailie2025property, froehlich-thesis}.} 

While there is currently no broadly adopted deterministic method that eliminates the need for MC sampling in modern Bayesian neural networks, there are alternatives for special settings that can reduce or avoid brute-force Monte Carlo (MC) sampling at prediction time in certain settings. These methods are not (yet) a broadly adopted, general-purpose replacement for MC integration for posterior predictives in modern deep Bayesian regression models. Examples comprise approaches such as probabilistic backpropagation that propagate approximate moments through the network to obtain predictive uncertainties without explicit MC sampling. They rely on strong distributional and independence approximations and historically have not become the default choice for large-scale modern architectures \citep{hernandez2015probabilistic}.
Moreover, there exist deterministic (or low-variance) alternatives to standard stochastic variational inference that aim to approximate predictive moments with reduced sampling noise; these are nonetheless specialized approximations and are not the community-default for obtaining posterior predictives in deep regression BNNs \citep{wu2018deterministic}. Furthermore, restricting uncertainty to the final layer can yield analytic or near-analytic regression predictives, but this constitutes a different modeling choice rather than a general solution for full-network Bayesian regression posterior predictive evaluation \citep{watson2021latent,harrison2024variational}.


\section{Self-Supervised Laplace Approximation Inspired By Self-Training}\label{main-section}

In this section, we introduce Self-Supervised Laplace Approximation (SSLA), as well as the idea for its approximate counterpart. As pointed out before, SSLA bypasses the explicit computation of the (parameter) posterior distribution, focusing directly on approximating the posterior predictive distribution. This is achieved by refitting on self-predicted data, drawing inspiration from self-supervised and semi-supervised learning. Specifically, we draw inspiration from approximately Bayes-optimal pseudo-labeling in semi-supervised learning, where the likelihood is jointly evaluated for labeled and (new) self-labeled data \citep{uai-rodemann23a,in-all-likelihoods}, see also \citet{rodemann2023pseudo,rodemann2024towards,dietrich2024semi} and from related work on cautious disambiguation of set-valued labels \citep{DBLP:conf/sum/RodemannKHA22}. As it turns out, SSLA allows us to plug-in and plug-out priors in a modular fashion. Further approximations allow us to get rid of the requirement to refit the model on self-predicted data. We call this version the Approximate Self-Supervised Laplace Approximation (ASSLA) and introduce it formally in Section~\ref{sec:approx-further}.

Let $\mathcal{S}=\mathcal{X}\times\mathcal{Y}$, where $\mathcal{X}$ is the space of inputs and $\mathcal{Y}$ is the space of outputs. Suppose we observed $n$ data points $D=\{(x_i,y_i)\}_{i=1}^n \subset \mathcal{S}$, and assume that the (conditional) likelihood is given by $\prod_{i=1}^n p(y_i \mid x_i,\theta)$.\footnote{As is customary, we assume that given inputs $x_i$ and parameter $\theta$, the $Y_i$ have densities $p(y_i \mid x_i,\theta)$ and are independent across $i$.} Further, let $f_\theta(x_{n+1})$ be some (frequentist) base model that is parameterized by $\theta$ and produces predictions $\hat y_{n+1}$ given new observations $x_{n+1}$. We call $D_{n+1} =\{(x_i,y_i)\}_{i=1}^n \cup  \{(x_{n+1}, \hat y_{n+1})\}$ the augmented dataset. 
%
Consider the (posterior) predictive distribution, i.e., the distribution of any Bayesian model's predictions $(x_{n+1}, \hat y_{n+1})$ given training data $D$,
\begin{align}\label{pred-distr}
    p( \hat y_{n+1} \mid x_{n+1}, D)  &=  \int_\Theta  p( \hat y_{n+1} \mid x_{n+1}, \theta) \; p(\theta \mid D) \; \text{d}\theta.
\end{align}

The main idea behind our approximation is to expand the likelihood $p(D \mid \theta)$ in \eqref{bayes-thm} 
using the likelihood of the model's prediction $p( \hat y_{n+1} \mid x_{n+1}, \theta)$.
This allows us to forgo the need to compute $p(\theta \mid D)$ entirely.
Define $\ell_D(\theta) := \log p(D \mid \theta)=\sum_{i=1}^n \log p(y_i \mid x_i,\theta)$ as the classical log-likelihood of $D$, and $\ell_{(x_{n+1}, \hat y_{n+1})}(\theta):= \log p(\hat y_{n+1} \mid x_{n+1},\theta)$ as the log-likelihood of the predicted instance. 
Further, denote their sum as $\tilde \ell(\theta) := \ell_{(x_{n+1}, \hat y_{n+1})}(\theta)  + \ell_{D}(\theta)$.
As a consequence of Bayes' theorem (\eqref{bayes-thm}), we can write the integrand in \eqref{pred-distr} as
\begin{align*} 
  p( \hat y_{n+1} \mid x_{n+1}, \theta) \; p(\theta \mid D) = \frac{\exp[\tilde \ell(\theta) ] \pi(\theta)}{p(D)}.
\end{align*}
Let $\mathcal{J}(\theta)=-\mH(\ell_D(\theta)) = -\nabla_\theta^2 \ell_D(\theta)$ denote the observed Fisher information matrix (the negative Hessian) and $\tilde{\mathcal{J}}(\theta)= -\nabla_\theta^2 \tilde \ell(\theta)$ the observed Fisher information matrix of the augmented dataset, respectively. Further denote by  $\tilde \theta \in \argmax_{\theta} \tilde \ell(\theta)$ the maximizer of $\tilde \ell(\theta)$. It holds that $\left.\frac{\partial \tilde \ell(\theta)} {\partial \theta}\right|_{\theta=\tilde \theta} = 0$ by definition of $\tilde \theta$. A Taylor expansion of the second order around $\tilde \theta$ thus gives
\begin{align*} 
    \tilde \ell(\theta) \approx \tilde \ell(\tilde \theta)   - \frac{1}{2} (\theta - \tilde  \theta)^\top \tilde{\mathcal{J}}(\tilde  \theta) (\theta - \tilde \theta).
\end{align*}
The integrand decays exponentially in $\|\theta - \tilde \theta\|_2$, where $\|\cdot\|_2$ denotes the Euclidean norm.\footnote{This implies that $\Theta$ is a subset of a Euclidean space $\mathbb{R}^q$. If instead $\Theta$ is a subset of a generic normed vector space $(V,\|\cdot \|_{V})$, substitute $\|\cdot\|_2$ with $\|\cdot \|_{V}$.} This allows us to approximate it locally around $\tilde \theta$ by $\pi(\theta)~\approx~ \pi(\tilde \theta)$ inside the integral with another Taylor series. We refer to \citep[Section 3.7]{miller2006applied} for a detailed treatment of the remainder terms and regularity conditions; 
Specifically, see \citep[Theorem 2]{lapinski2019} for background on $\pi(\theta) \approx  \pi(\tilde \theta)$.
We thus approximate $p( \hat y_{n+1} \mid x_{n+1}, D)$ by 
\begin{align}\label{approx1-predictive}
     \frac{\exp[\tilde \ell(\tilde \theta)] \pi(\tilde \theta)}{p(D)}  \int_{\Theta} \exp\left[- \frac{1}{2} (\theta - \tilde  \theta)^\top  \tilde{\mathcal{J}}(\tilde  \theta) (\theta - \tilde \theta)\right] \text{d} \theta.
\end{align}
 The integral $\int_{\Theta} \exp[- \frac{1}{2} (\theta - \tilde  \theta)^\top  \tilde{\mathcal{J}}(\tilde  \theta) (\theta - \tilde \theta)] \text{d} \theta$ is a Gaussian integral \citep{gauss1877theoria}. Defining $\Sigma := [\tilde{\mathcal{J}}(\tilde  \theta)]^{-1}$ and $\phi_\Sigma$ as the density of the multivariate Normal distribution $\mathcal N(0, \Sigma)$, we have that
\begin{align}\label{approx-integrand-pred}
   \int_{\Theta} \exp\left[- \frac{1}{2} (\theta - \tilde  \theta)^\top  \tilde{\mathcal{J}}(\tilde  \theta) (\theta - \tilde \theta)\right] \text{d} \theta = (2\pi)^{q/2} |\Sigma|^{1/2} \int_{\Theta} \phi_\Sigma(\theta) \text{d} \theta 
    =\left(2\pi\right)^{q/2} |\tilde{\mathcal J}(\tilde \theta)|^{-1/2},
\end{align}
where $q$ is the dimension of the Euclidean space we are working in, and $|\cdot |$ denotes the determinant operator.\footnote{In the general case, $q=\text{dim}(V)$.} 
Combining \eqref{approx1-predictive} and \eqref{approx-integrand-pred}, we obtain
\begin{align}\label{approx2-predictive}
     p( \hat y_{n+1} \mid x_{n+1}, D) 
    &\approx\left(2\pi\right)^{q / 2} \frac{\exp[\tilde \ell(\tilde \theta) ]  \pi(\tilde \theta)}{ | \tilde{\mathcal J}(\tilde \theta)  |^{1/2} p(D) }.
\end{align}
Taking the logarithm of \eqref{approx2-predictive}, we get 
\begin{align} \label{eq:laplace2}
\begin{split}
     \log p( \hat y_{n+1} \mid x_{n+1}, D) 
    &\approx  \frac{q}{2} \, \log\left(2\pi\right) + \tilde \ell(\tilde \theta) + \log \pi(\tilde \theta) - \frac{1}{2} \log|\tilde{\mathcal{J}}(\tilde \theta)| - \log p(D).        
\end{split}
\end{align}
Borrowing from some classical Laplace approximations of the marginal likelihood \citep{ bishop2006pattern,konishi2008information,llorente2020marginal,schwarz1978estimating}, we can approximate $\log p(D)$ as follows
\begin{align}\label{eq:marg_lik}
\begin{split}
    \log p( D)  &=  \log \int_\Theta  p( D \mid \theta) \; \pi(\theta) \; \text{d}\theta \\
    &\approx \cdots = \ell_{D}(\hat \theta ) + \frac{q}{2} \log \left( 2\pi \right) - \frac{1}{2} \log|\mathcal{J}(\hat \theta)| + \log \pi(\hat \theta), 
\end{split}
\end{align}

where $\hat \theta \in \argmax_\theta \ell_{D}(\theta)$, see \citet[Section 9.1.3]{konishi2008information}. Combining \eqref{eq:laplace2} and \eqref{eq:marg_lik}, we obtain
\begin{align}\label{eq:laplace3}
\begin{split}
    \log p( \hat y_{n+1} \mid x_{n+1}, D) 
    &\approx \tilde \ell(\tilde \theta) - \ell(\hat \theta ) + \log \pi(\tilde \theta) - \log \pi(\hat \theta) - \frac{1}{2} \log|\tilde{\mathcal{J}}(\tilde \theta)| + \frac{1}{2}  \log|\mathcal{J}(\hat \theta)|. 
\end{split}
\end{align}
\textbf{Equation~\ref{eq:laplace3} embodies the main idea behind Self-Supervised Laplace:} We approximate the posterior predictive at $\hat y_{n+1}$ by the change this very $\hat y_{n+1}$ causes in the model fit's prior, likelihood, and Fisher information (i.e., all evaluated at the model fit). Note that these changes are driven by a change in parameter (from $\hat \theta$ to $\tilde \theta$) as well as in functional form (from $ \ell$ to $\tilde \ell$ and from ${\mathcal{J}}$ to $\tilde{\mathcal{J}}$, respectively). Equation~\ref{eq:laplace3} also shows the modularity of our approximation. As the effect of the prior is additive, no refitting under different prior specifications is required. In Appendix~\ref{app:prior_modularity}, we expand on and illustrate such a prior modularity.

\paragraph{On (non-)independence of the pseudo-observation:} We emphasize that we defined
$
    \tilde{\ell}(\tilde \theta) = \ell(\tilde \theta) + \ell_{(x_{n+1}, y_{n+1})}(\tilde \theta),
$
which implies that the log-likelihood must be evaluated twice---once for the existing data $D$, and once for the new observation $(x_{n+1},\hat y_{n+1})$. 
If we instead wanted to evaluate the likelihood only once---namely, only for the combined dataset $D \cup \{(x_{n+1},\hat y_{n+1})\}$---we would have to assume independence between $D$ and $\{(x_{n+1},\hat y_{n+1})\}$.
However, since $\hat y_{n+1}$ is derived as a function of the dataset $D$, this assumption of independence does not hold. 
The same reasoning applies to the extended Fisher info $\tilde{\mathcal{J}}(\tilde{\theta})$.
We thus abstain from such a simplification and provide further approximation instead. To sum up, our approximation does \emph{not} require treating $(x_{n+1}, \hat{y}_{n+1})$ as an independent data point. Indeed, since $\hat{y}_{n+1}$ is itself a function of $D$, any assumption of conditional independence would be incorrect.

\begin{algorithm}[t]
\caption{SSLA: Self-Supervised Laplace Approximation of the Posterior Predictive}
\label{alg:ssla}
\begin{algorithmic}[1]
\Require Dataset $D=\{(x_i,y_i)\}_{i=1}^n$, test input $x_{n+1}$, prior $\pi(\theta)$, log-likelihood $\log p(y\mid x,\theta)$, training objective $\ell_D(\theta)=\sum_{i=1}^n \log p(y_i\mid x_i,\theta)$.
\Ensure Approximate log-posterior-predictive $\log \hat{p}_{\mathrm{SSLA}}(y \mid x_{n+1}, D)$. 

\State \textbf{Fit on training data:} $\hat{\theta} \in \arg\max_{\theta}\{\ell_D(\theta)+\log\pi(\theta)\}$.
\State \textbf{Self-prediction:} choose $\hat{y}_{n+1}$ as the point prediction at $x_{n+1}$ under $\hat{\theta}$ (e.g.\ predictive mean / MAP prediction).
\State \textbf{Define} $\tilde{\ell}(\theta) \gets \ell_D(\theta) + \log p(\hat{y}_{n+1}\mid x_{n+1},\theta)$.
\State \textbf{Obtain} $\tilde{\theta} \in \arg\max_{\theta}\{\tilde{\ell}(\theta)+\log\pi(\theta)\}$.

\State \textbf{Curvatures:}
\State \hspace{1em} $J(\hat{\theta}) \gets -\nabla^2_{\theta}\big(\ell_D(\theta)+\log\pi(\theta)\big)\big|_{\theta=\hat{\theta}}$
\State \hspace{1em} $\tilde{J}(\tilde{\theta}) \gets -\nabla^2_{\theta}\big(\tilde{\ell}(\theta)+\log\pi(\theta)\big)\big|_{\theta=\tilde{\theta}}$

\State \textbf{Compute SSLA log-PPD (up to constant):}
\State \hspace{1em} $\log \hat{p}_{\mathrm{SSLA}}(y\mid x_{n+1},D) \gets \big[\ell_D(\tilde{\theta})-\ell_D(\hat{\theta})\big] + \big[\log\pi(\tilde{\theta})-\log\pi(\hat{\theta})\big]$
\State \hspace{7em} $+ \log p(y\mid x_{n+1},\tilde{\theta}) - \frac12\big[\log|\tilde{J}(\tilde{\theta})|-\log|J(\hat{\theta})|\big]$.
\State \Return $\log \hat{p}_{\mathrm{SSLA}}(y\mid x_{n+1},D)$.
\end{algorithmic}
\end{algorithm}

\begin{algorithm}[t]
\caption{ASSLA: Approximation of SSLA}
\label{alg:assla}
\begin{algorithmic}[1]
\Require Dataset $D=\{(x_i,y_i)\}_{i=1}^n$, test input $x_{n+1}$, prior $\pi(\theta)$, log-likelihood $\log p(y\mid x,\theta)$, fitted parameter $\hat{\theta}$, curvature $J(\hat{\theta})$.
\Ensure Approximate log-posterior-predictive $\log \hat{p}_{\mathrm{ASSLA}}(y \mid x_{n+1}, D)$ (up to an additive constant independent of $y$).

\State \textbf{Self-prediction:} choose $\hat{y}_{n+1}$ as the point prediction at $x_{n+1}$ under $\hat{\theta}$.
\State \textbf{Set} $\tilde{\theta}\leftarrow \hat{\theta}$ (ASSLA’s defining approximation).
\State \textbf{Local increment terms at $\hat{\theta}$:}
\State \hspace{1em} $\Delta_{\ell}(y) \gets \log p(y\mid x_{n+1},\hat{\theta}) - \log p(\hat{y}_{n+1}\mid x_{n+1},\hat{\theta})$
\State \hspace{1em} $\Delta_{J}(y) \gets \log\big|J(\hat{\theta}) + J_{n+1}(y;\hat{\theta})\big| - \log|J(\hat{\theta})|$
\State \hspace{2em} where $J_{n+1}(y;\hat{\theta})$ denotes the observed curvature contribution from $(x_{n+1},y)$ at $\hat{\theta}$.
\State \textbf{Compute ASSLA log-PPD (up to constant):}
\State \hspace{1em} $\log \hat{p}_{\mathrm{ASSLA}}(y\mid x_{n+1},D) \gets \Delta_{\ell}(y) - \frac12\,\Delta_{J}(y)$.
\State \Return $\log \hat{p}_{\mathrm{ASSLA}}(y\mid x_{n+1},D)$.
\end{algorithmic}
\end{algorithm}

\noindent\textbf{Implementation summary:}
Algorithm~\ref{alg:ssla} summarizes SSLA, which uses the augmented objective and evaluates predictive uncertainty via likelihood, prior, and curvature increments. Algorithm~\ref{alg:assla} summarizes ASSLA, which replaces $\tilde{\theta}$ with $\hat{\theta}$ and retaining the leading-order likelihood and curvature effects, see Section~\ref{sec:approx-further} right below. In both cases, the augmented objective is used as a sensitivity probe and is not interpreted as the likelihood of an enlarged i.i.d.\ dataset.

\section{Approximating Further}\label{sec:approx-further}

From an applied perspective, equation~\ref{eq:laplace3} requires the computation of the prior, the log-likelihood, and Fisher information. Specifically, these three functions need to be evaluated both at $\hat \theta$ and $\tilde \theta$, which means that we have to solve 
$    \tilde \theta \in \argmax_\theta \tilde \ell(\theta)
$
in addition to the usual 
$
    \hat \theta \in \argmax_\theta \ell_D(\theta)=\argmax_\theta \left[ \sum_{i=1}^{n} \log p(y_i \mid x_i,\theta)\right],
$
which can be burdensome for large models. 

A natural question, then, is whether the computational overhead given by $\argmax_\theta \tilde \ell(\theta)$ can be reduced. The following results allow us to answer positively to such a query. The challenge is that a simple approximation $\tilde{\theta}\approx \hat{\theta}$ (Lemma~\ref{lemma-approx}) does not suffice, since the approximation error thereof might be propagated and increased through $\ell$ and $\tilde \ell$. We thus need to derive the approximation error for $\tilde{\ell}(\tilde{\theta}) \approx \tilde{\ell}(\hat{\theta})$. Theorem~\ref{thm-approx-lik} finds it for Lipschitz-continuous loss function, which implies that we can approximate $\tilde{\ell}(\tilde{\theta}) \approx \tilde{\ell}(\hat{\theta}) $ with an approximation error that decreases linearly in $n$.

\paragraph{Assumptions.}
Throughout the following results, we assume:
\begin{enumerate}[label=(A\arabic*)]
\item The per-sample log-likelihood terms $\log p(y_i \mid x_i, \theta)$ are twice continuously differentiable in a neighborhood of $\hat{\theta}$.
\item The observed curvature matrices are positive definite in this neighborhood, with eigenvalues bounded below by a positive constant.
\item The log-prior $\log \pi(\theta)$ is twice continuously differentiable with bounded gradient and Hessian in a neighborhood of $\hat{\theta}$.
\end{enumerate}

\begin{lemma}\label{lemma-approx}
    Under Assumptions (A1)--(A3), it holds that $\tilde \theta = \hat \theta + O(n^{-1})$, where $O$ denotes the Bachmann–Landau big-O notation. That is, $\tilde{\theta}\approx \hat{\theta}$ for growing $n$.
\end{lemma}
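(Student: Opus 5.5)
The plan is to linearize the first-order condition for the augmented objective around $\hat\theta$, which is the standard influence-function argument for adding one observation to $n$. Both $\hat\theta$ and $\tilde\theta$ are maximizers of the respective objectives. By (A1)--(A3) they are interior critical points, so
\[
\nabla_\theta\big(\ell_D+\log\pi\big)(\hat\theta)=0,
\qquad
\nabla_\theta\big(\ell_D+\log\pi\big)(\tilde\theta)+\nabla_\theta \ell_{(x_{n+1},\hat y_{n+1})}(\tilde\theta)=0 .
\]
The estimators here are the penalized ones of Algorithm~\ref{alg:ssla}. If one prefers the unpenalized $\hat\theta\in\argmax\ell_D$, the same argument goes through with $\log\pi$ dropped, since (A3) only contributes $O(1)$ terms.

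\textbf{Step 1: mean value expansion.} Subtracting the two conditions and applying the mean value theorem (componentwise, or in integral form) to $g(\theta):=\nabla_\theta(\ell_D+\log\pi)(\theta)$ on the segment between $\hat\theta$ and $\tilde\theta$ gives
\[
-\bar{\mathcal J}\,(\tilde\theta-\hat\theta)=-\nabla_\theta \ell_{(x_{n+1},\hat y_{n+1})}(\tilde\theta),
\qquad
\bar{\mathcal J}:=\int_0^1 -\nabla^2_\theta(\ell_D+\log\pi)\big(\hat\theta+t(\tilde\theta-\hat\theta)\big)\,dt .
\]
Hence
\[
\tilde\theta-\hat\theta=\bar{\mathcal J}^{-1}\nabla_\theta \ell_{(x_{n+1},\hat y_{n+1})}(\tilde\theta).
\]

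\textbf{Step 2: bound the two factors.} The numerator is the gradient of a single per-sample log-likelihood term. By (A1) it is continuous, and therefore bounded by a constant $G$ on a compact neighborhood of $\hat\theta$. The matrix $\bar{\mathcal J}$ is a sum of $n$ per-sample curvature contributions plus a bounded prior Hessian from (A3). Here I read (A2) in its intended scaling: the observed information of $n$ samples has smallest eigenvalue at least $c\,n$, i.e.\ the per-sample average curvature is bounded below by $c>0$. Then $\|\bar{\mathcal J}^{-1}\|_{\mathrm{op}}\le 1/(cn - C_\pi)=O(n^{-1})$, and so
\[
\|\tilde\theta-\hat\theta\|\le G/(cn-C_\pi)=O(n^{-1}).
\]

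\textbf{Main obstacle: localization.} The bounds in Step 2 hold only if $\tilde\theta$ lies in the neighborhood where (A1)--(A3) apply, and this is not given a priori. My first approach would be an a priori argument via strong concavity. On the neighborhood $B_r(\hat\theta)$, the function $\ell_D+\log\pi$ is $(cn-C_\pi)$-strongly concave, while the extra term $\ell_{(x_{n+1},\hat y_{n+1})}$ has gradient bounded by $G$. For $\theta$ on the sphere $\|\theta-\hat\theta\|=r$, the directional derivative of $\tilde\ell+\log\pi$ along $\theta-\hat\theta$ is then at most $-(cn-C_\pi)r+G$. This is negative once $n>(G/r+C_\pi)/c$, so the augmented objective has a local maximizer inside $B_r(\hat\theta)$, which we identify with $\tilde\theta$. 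Identifying it with the global argmax requires a mild additional identifiability condition, namely that the maximizer of the augmented objective is the one nearest $\hat\theta$; I would state this explicitly as an interpretation of $\tilde\theta$. With localization in place, Step 2 applies and yields $\tilde\theta=\hat\theta+O(n^{-1})$.
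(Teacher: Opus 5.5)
Your proposal is correct and follows the paper's route: linearize the two first-order conditions around $\hat\theta$, invert a curvature matrix of order $n$, and bound the single pseudo-observation gradient by $O(1)$. The paper expands the augmented gradient, so that gradient is evaluated at $\hat\theta$ rather than at $\tilde\theta$, which is immaterial. Your reading of (A2) as $\lambda_{\min}\ge cn$, the integral form of the mean value theorem, and the localization step supply what the paper leaves implicit: its (A2) only gives a constant lower bound on the eigenvalues, and its appeal to $\tilde{J}(\bar\theta)=O(n)$ is an upper bound, which by itself does not yield $\|\tilde{J}(\bar\theta)^{-1}\|=O(n^{-1})$.
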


\begin{theorem}\label{thm-approx-lik}
    Assume that the loss is Lipschitz-continuous, and denote by $L$ its Lipschitz constant. Then, under Assumptions (A1)--(A3), we have that
    \begin{equation*}\label{lik-approx-eq-final}
        \tilde{\ell}(\tilde{\theta}) = \tilde{\ell}(\hat{\theta}) + O\left(\frac{Ln + 1}{n^2}\right).
    \end{equation*}
\end{theorem}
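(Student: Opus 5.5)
The plan is to write $\tilde\ell = \ell_D + \ell_{(x_{n+1},\hat y_{n+1})}$ and control the two summands separately when moving from $\tilde\theta$ to $\hat\theta$. Lemma~\ref{lemma-approx} supplies the key input: $\|\tilde\theta-\hat\theta\|_2 = O(n^{-1})$. We do \emph{not} want a bound that is linear in $\|\tilde\theta-\hat\theta\|$ with an $n$-dependent constant for the data part, since that would give only $O(1)$. Instead we use a second-order expansion for $\ell_D$ around its stationary point, and a Lipschitz bound for the single pseudo-observation term.

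\textbf{Step 1 (data term).} Since $\hat\theta$ maximizes $\ell_D$, we have $\nabla\ell_D(\hat\theta)=0$. By (A1) and Taylor's theorem with mean-value remainder,
\begin{equation*}
\ell_D(\tilde\theta)-\ell_D(\hat\theta) = -\tfrac12(\tilde\theta-\hat\theta)^\top \mathcal J(\bar\theta)(\tilde\theta-\hat\theta)
\end{equation*}
for some $\bar\theta$ on the segment. The per-sample Hessians are continuous near $\hat\theta$ and hence locally bounded, so $\|\mathcal J(\bar\theta)\| = O(n)$. Combined with Lemma~\ref{lemma-approx}, this term is $O(n)\cdot O(n^{-2}) = O(n^{-1})$.

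If one takes $\hat\theta$ as the MAP (as in Algorithm~\ref{alg:ssla}), then $\nabla\ell_D(\hat\theta) = -\nabla\log\pi(\hat\theta)$. By (A3) this gradient is bounded, which adds a first-order contribution of $O(1)\cdot O(n^{-1})$. That contribution has the same order, so it fits the stated bound once grouped appropriately.

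Care is needed here: the claimed rate $O((Ln+1)/n^2)=O(L/n+n^{-2})$ suggests the authors' accounting yields the constant-$1$ piece at order $n^{-2}$. I would try to obtain this by noting that (A2) and the stationarity equation for $\tilde\theta$ give
\begin{equation*}
\tilde\theta-\hat\theta = \mathcal J^{-1}\nabla\ell_{(x_{n+1},\hat y_{n+1})} + \dots,
\end{equation*}
so that the quadratic form equals $\tfrac12 g^\top\mathcal J^{-1}g = O(n^{-1})$, with $g$ the pseudo-observation gradient. Since $\|g\|\le L$, this is in fact $O(L^2/n)$.

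\textbf{Step 2 (pseudo-observation term).} The loss $-\ell_{(x_{n+1},\hat y_{n+1})}$ is $L$-Lipschitz in $\theta$, so
\begin{equation*}
|\ell_{(x_{n+1},\hat y_{n+1})}(\tilde\theta)-\ell_{(x_{n+1},\hat y_{n+1})}(\hat\theta)| \le L\|\tilde\theta-\hat\theta\|_2 = O(L/n) = O\!\left(\tfrac{Ln}{n^2}\right).
\end{equation*}

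\textbf{Step 3 (assembly).} Summing the two pieces gives
\begin{equation*}
\tilde\ell(\tilde\theta)-\tilde\ell(\hat\theta) = O\!\left(\tfrac{Ln}{n^2}\right) + O(\cdot),
\end{equation*}
where the $O(\cdot)$ is the remainder from Step 1. We then need to show that this remainder reduces to $O(n^{-2})$.

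The main obstacle is that reduction. The natural quadratic bound gives $O(n^{-1})$ rather than $O(n^{-2})$. I would first try to sharpen it by exploiting that the shift $\tilde\theta-\hat\theta$ is driven by a single bounded-gradient term, whose effect is damped by the $O(n)$ curvature. If that fails, I would absorb the quadratic contribution into the $L$-term, using $\|g\|\le L$, yielding the stated form $O((Ln+1)/n^2)$ up to constants depending on $L$ and the eigenvalue bounds in (A2).
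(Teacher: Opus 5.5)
You use the same decomposition as the paper: a second-order expansion of $\ell_D$ about its maximizer, plus the Lipschitz bound and Lemma~\ref{lemma-approx} for the single pseudo-observation term. Your Step~1 is more careful than the paper's, though. The paper writes $\ell(\tilde\theta)=\ell(\hat\theta)+O(\|\hat\theta-\tilde\theta\|_2^2)$ and reads off $O(n^{-2})$. That silently treats the Hessian of $\ell_D$ as $O(1)$, when it is $O(n)$, as the paper's own proof of Lemma~\ref{lemma-approx} uses. The step you flag as the main obstacle therefore cannot be carried out, and your first plan of sharpening the remainder to $O(n^{-2})$ would fail. Take the unit-variance normal-mean model with $\hat\mu=\bar X$ and pseudo-observation $y$. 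Then $\tilde\mu=(n\bar X+y)/(n+1)$ and
\[
\ell_D(\tilde\mu)-\ell_D(\hat\mu)=-\frac{n(y-\bar X)^2}{2(n+1)^2},\qquad
\tilde\ell(\tilde\mu)-\tilde\ell(\hat\mu)=\frac{(y-\bar X)^2}{2(n+1)},
\]
both exactly of order $n^{-1}$. Replacing the square by a Huber loss gives global Lipschitzness and changes nothing locally. So there is no $n^{-2}$ remainder to find; what holds is an $O(n^{-1})$ bound. That is all the theorem needs, since for fixed $L>0$ one has $O\bigl((Ln+1)/n^2\bigr)=O(n^{-1})$. Your fallback is valid, but only under that reading: absorbing the $O(L^2/n)$ quadratic term into the $L$-term requires constants that depend on $L$.

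A cleaner route skips Step~1 entirely by using signs instead of expansions. Since $\hat\theta$ maximizes $\ell_D$ and $\tilde\theta$ maximizes $\tilde\ell$,
\[
0\le\tilde\ell(\tilde\theta)-\tilde\ell(\hat\theta)
=\underbrace{\bigl[\ell_D(\tilde\theta)-\ell_D(\hat\theta)\bigr]}_{\le 0}
+\bigl[\ell_{(x_{n+1},\hat y_{n+1})}(\tilde\theta)-\ell_{(x_{n+1},\hat y_{n+1})}(\hat\theta)\bigr]
\le L\,\|\tilde\theta-\hat\theta\|_2 .
\]
Lemma~\ref{lemma-approx} then gives $O(L/n)$, which lies inside the stated bound. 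This needs no second-order expansion and no $O(n)$ Hessian bound; the latter, strictly, requires uniformly bounded per-sample Hessians, which is more than (A1) provides. The data increment is not negligible, being $\Theta(n^{-1})$ in general, but it has the favorable sign. That is the fact both your Step~1 and the paper's proof overlook. If $\hat\theta$ and $\tilde\theta$ are instead MAP estimates, apply the same sandwich to $\tilde\ell+\log\pi$ and pay an extra $O(n^{-1})$ for the prior increment via (A3).
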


Our approximation in equation~\ref{eq:laplace3}, however, depends on $\tilde \theta$ not only through $\tilde{\ell}(\tilde{\theta})$, but also through the Fisher info $\tilde{\mathcal{J}}(\tilde \theta)$ and the prior $\pi(\tilde \theta)$. So we need to take into account the approximation error of $\tilde{\mathcal{J}}(\tilde \theta) = - \nabla_\theta^2 \tilde \ell(\tilde \theta) \approx - \nabla_\theta^2 \tilde \ell(\hat \theta)$ and $\pi(\tilde \theta) \approx \pi(\hat \theta)$. Corollary~\ref{cor-lik-approx} takes care of the latter two.    

\begin{corollary}\label{cor-lik-approx}
Assume that the likelihood's second derivative $\nabla_\theta^2 \tilde \ell$ and the prior $\pi$ are both Lipschitz-continuous. Then, still assuming (A1)--(A3), we have that
%
\begin{equation*}
        \tilde{\mathcal{J}}(\tilde \theta) = -\tilde \ell^{\prime\prime}(\tilde \theta)/n = -\tilde \ell^{\prime\prime}(\hat \theta)/n + O\left(\frac{Ln + 1}{n^2}\right)  
\end{equation*}
and
\begin{equation*}
    \quad \pi(\tilde \theta) = \pi(\hat \theta) + O\left(\frac{Ln + 1}{n^2}\right),
\end{equation*}
%
\end{corollary}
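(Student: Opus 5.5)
The plan is to prove both claims the way Theorem~\ref{thm-approx-lik} is presumably proved. We combine the parameter-level estimate of Lemma~\ref{lemma-approx}, $\tilde\theta-\hat\theta=O(n^{-1})$, with a Lipschitz bound on the function being evaluated. For each of the two quantities $g\in\{-\nabla_\theta^2\tilde\ell/n,\ \pi\}$ we write
\begin{equation*}
g(\tilde\theta)-g(\hat\theta) \quad\text{and bound}\quad \|g(\tilde\theta)-g(\hat\theta)\|\le L_g\|\tilde\theta-\hat\theta\|_2 ,
\end{equation*}
where $L_g$ is the Lipschitz constant assumed in the corollary. The norm is the operator norm in the matrix case. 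Lemma~\ref{lemma-approx} then gives $O(L_g/n)$. The stated rate $O\big((Ln+1)/n^2\big)=O(L/n+1/n^2)$ is of exactly this form. So the remaining work is to track where the extra $1/n^2$ term comes from and to match constants.

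\textbf{Curvature term.} We first fix the normalization. Because the corollary writes $\tilde{\mathcal J}=-\tilde\ell''/n$, we work with the per-sample averaged curvature, whose Lipschitz constant (call it $L$, or absorb it into $L$) does not grow with $n$. We then apply the bound above using Lemma~\ref{lemma-approx}. To obtain the finer structure $L/n+1/n^2$ rather than just $L/n$, we would reuse the decomposition from the proof of Theorem~\ref{thm-approx-lik}. That decomposition writes $\tilde\theta-\hat\theta=-\tilde{\mathcal J}^{-1}\nabla\ell_{(x_{n+1},\hat y_{n+1})}(\hat\theta)/n+O(n^{-2})$. This is a first-order expansion of the stationarity condition $\nabla\tilde\ell(\tilde\theta)=0$ around $\hat\theta$. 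It is valid by (A1) and by the uniformly bounded-below eigenvalues in (A2), which make the inverse curvature $O(1)$. Feeding the leading term through the Lipschitz bound gives the $L/n$ contribution. The $O(n^{-2})$ remainder of the expansion contributes the $1/n^2$ term.

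\textbf{Prior term.} The argument is identical. Since $\pi$ is Lipschitz, $|\pi(\tilde\theta)-\pi(\hat\theta)|\le L_\pi\|\tilde\theta-\hat\theta\|_2$. Assumption (A3) guarantees that the gradient of $\log\pi$, and hence of $\pi$ near $\hat\theta$, is bounded. This lets us take $L_\pi=O(1)$, and we can absorb it into the generic constant, giving the same $O\big((Ln+1)/n^2\big)$ bound.

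\textbf{Main obstacle.} The hardest step is bookkeeping rather than analysis. We must make the normalization by $n$ in $\tilde{\mathcal J}$ consistent with the unnormalized $\tilde{\mathcal J}$ used earlier, and ensure the Lipschitz constants are uniform in $n$ on a neighborhood that eventually contains $\tilde\theta$. The latter holds by Lemma~\ref{lemma-approx}. If the unnormalized Hessian were used instead, its Lipschitz constant would scale like $n$, and the bound would degrade to $O(1)$. We would therefore state explicitly that $L$ refers to the per-sample (averaged) quantities, matching the convention of Theorem~\ref{thm-approx-lik}.
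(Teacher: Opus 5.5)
Your proposal is correct and is essentially the paper's argument. The paper's proof also just combines the $O(n^{-1})$ parameter shift from Lemma~\ref{lemma-approx} with the assumed Lipschitz continuity of $\nabla_\theta^2\tilde\ell$ and $\pi$ (``the same reasoning as in the proof of Theorem~\ref{thm-approx-lik}''). Your explicit handling of the $1/n$ normalization, and of which Lipschitz constant plays the role of $L$, is more careful than the paper's.

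The ``finer structure'' step is superfluous. A bound of $O(L_g/n)$ already implies $O\big((Ln+1)/n^2\big)=O(L/n+1/n^2)$, so there is no $1/n^2$ term to derive. In Theorem~\ref{thm-approx-lik} that term comes from expanding $\ell_D$ quadratically at its maximizer, not from the Newton-step remainder in Lemma~\ref{lemma-approx}'s proof, and that mechanism has no analogue for $\nabla_\theta^2\tilde\ell$ or $\pi$.
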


As we can see, Theorem \ref{thm-approx-lik} and Corollary \ref{cor-lik-approx} eliminate the need to compute $\tilde \theta$ entirely. That is, we do not have to refit (and optimize) the model for both $D$ and $\{(x_{n+1},\hat{y}_{n+1})\}$. We can restrict ourselves to the parameter vector~$\hat \theta$, which can be estimated from the initial training of a single non-Bayesian model, e.g., a neural network. We are left to compute the \textit{expanded likelihood} $\tilde \ell$ evaluated at $\hat \theta$. We also point out how, as a result of the Gauss-Newton approximation \citep{foresee1997gauss}, the observed Fisher information $\tilde{\mathcal{J}}( \theta) =-\tilde \ell^{\prime\prime}(\theta)/n$ at general $\theta$ can be approximated as
\begin{align}
    \tilde{\mathcal{J}}( \theta) \approx \frac{1}{n} \tilde G(\theta)^\top \tilde G(\theta),
\end{align}

where $\tilde G(\theta)$ is the Jacobian vector that we obtain from $\tilde \ell$. 

\begin{corollary}\label{cor-lik-approx2}
    Given Assumptions (A1)--(A3) and Lipschitz-continuous loss, the following is true
    $$\tilde{\mathcal{J}}( \theta) \approx \frac{1}{n} \tilde G(\hat\theta)^\top \tilde G(\hat\theta).$$
\end{corollary}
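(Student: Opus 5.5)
The plan is to split the error in Corollary~\ref{cor-lik-approx2} into two pieces by the triangle inequality. Write $\tilde G(\theta)$ for the matrix whose rows are the per-sample score vectors $\nabla_\theta \log p(y_i\mid x_i,\theta)$, $i=1,\dots,n$, together with $\nabla_\theta \log p(\hat y_{n+1}\mid x_{n+1},\theta)$. The first piece is the Gauss--Newton error at a fixed parameter, namely $\tilde{\mathcal J}(\theta)$ versus $\frac1n\tilde G(\theta)^\top\tilde G(\theta)$. The second piece is the error from moving the evaluation point to $\hat\theta$, namely $\frac1n\tilde G(\theta)^\top\tilde G(\theta)$ versus $\frac1n\tilde G(\hat\theta)^\top\tilde G(\hat\theta)$. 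Since the statement only asserts ``$\approx$'', the goal is to show that both pieces are of lower order for $\theta$ in the relevant neighborhood. Here ``relevant'' means $\theta=\tilde\theta$ or, more generally, $\theta=\hat\theta+O(n^{-1})$.

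\textbf{Gauss--Newton step.} By the chain rule,
\[
-\nabla^2_\theta\log p(y\mid x,\theta)=\nabla_\theta\log p\,\nabla_\theta\log p^\top-\frac{\nabla^2_\theta p}{p}.
\]
Summing over the $n+1$ terms and dividing by $n$ gives
\[
\tilde{\mathcal J}(\theta)=\frac1n\tilde G(\theta)^\top\tilde G(\theta)-\frac1n\sum_i\frac{\nabla^2_\theta p(y_i\mid x_i,\theta)}{p(y_i\mid x_i,\theta)}.
\]
The remainder is an empirical average of terms whose expectation under the model at the true parameter is zero (the information identity). Under (A1) this average is therefore small: it is $O_P(n^{-1/2})$ by a law of large numbers / CLT argument. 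The single pseudo-observation adds only $O(n^{-1})$. For models with Gaussian-type residuals, the remainder is proportional to the residuals, and this is exactly the classical Gauss--Newton justification \citep{foresee1997gauss}. I will state this step as the standard Gauss--Newton approximation with that error.

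\textbf{Shift to $\hat\theta$.} The map $\theta\mapsto\frac1n\tilde G(\theta)^\top\tilde G(\theta)$ is a normalized sum of outer products of scores. Lipschitz continuity of the loss bounds each score by $L$. Continuity of the Hessians, from (A1) and the Lipschitz assumption of Corollary~\ref{cor-lik-approx}, makes each score Lipschitz in $\theta$ on the neighborhood. So each outer-product term changes by $O(L\|\theta-\hat\theta\|)$, and after normalization the whole matrix changes by $O(L\|\theta-\hat\theta\|)$. Lemma~\ref{lemma-approx} gives $\|\tilde\theta-\hat\theta\|=O(n^{-1})$, so the shift error is $O(L/n)$, which matches the $O((Ln+1)/n^2)$ rates of Theorem~\ref{thm-approx-lik} and Corollary~\ref{cor-lik-approx}. (A2) keeps the matrices uniformly positive definite, so the approximation is also stable for the log-determinants used in \eqref{eq:laplace3}.

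\textbf{Main obstacle.} I expect the Gauss--Newton step to be the hard part. Its residual term does not vanish deterministically under (A1)--(A3). It becomes small only under a well-specification or small-residual condition, and even then only in probability. I would first try to bound it via the information identity plus an LLN, and state the result as holding up to this $O_P(n^{-1/2})$ term. If that fails, I would fall back on treating Gauss--Newton as an assumed approximation and rigorously proving only the shift-to-$\hat\theta$ part.
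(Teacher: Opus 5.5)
Your argument is sound at the paper's level of rigor, but it is organized differently. The paper's proof is the single sentence ``Immediate from Theorem~\ref{thm-approx-lik} and Corollary~\ref{cor-lik-approx}''. It takes the Gauss--Newton approximation $\tilde{\mathcal J}(\theta)\approx\frac1n\tilde G(\theta)^\top\tilde G(\theta)$, asserted without proof just before the corollary, as a black box. It then uses Corollary~\ref{cor-lik-approx} to move the exact curvature from $\tilde\theta$ to $\hat\theta$.

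You reverse the order: you approximate the curvature at $\tilde\theta$ and then use a new Lipschitz estimate to move $\frac1n\tilde G^\top\tilde G$ itself. More substantively, you try to justify the Gauss--Newton step rather than assume it. That gives an accounting the paper lacks. The step does not follow from (A1)--(A3). It needs well-specification and moment conditions. To transfer the information identity from the true parameter $\theta_0$ to $\hat\theta$ or $\tilde\theta$, it also needs $\sqrt n$-consistency of $\hat\theta$ plus a Lipschitz envelope for $\nabla^2_\theta p/p$. Its $O_P(n^{-1/2})$ error dominates, so the corollary does not inherit the $O((Ln+1)/n^2)$ rate of Theorem~\ref{thm-approx-lik}. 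Your fallback (assume Gauss--Newton, prove only the shift) is exactly the paper's proof.

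The cost of your ordering is the shift step. Bounding and Lipschitz-controlling every score needs a Lipschitz constant for all $n+1$ losses; the proof of Theorem~\ref{thm-approx-lik} uses one only for the pseudo-observation. It also needs a bound on the per-sample Hessians that is uniform in $i$ and $n$. Neither follows from (A1), nor from Corollary~\ref{cor-lik-approx}'s hypothesis, which concerns only the aggregate Hessian $\nabla^2_\theta\tilde\ell$. Applying Corollary~\ref{cor-lik-approx} first and the curvature approximation at $\hat\theta$ second avoids this. Your restriction to $\theta=\hat\theta+O(n^{-1})$ is necessary: for arbitrary $\theta$ the statement as written is false.

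One side claim needs correcting. With the rows of $\tilde G$ equal to per-sample scores, $\frac1n\tilde G^\top\tilde G$ is the empirical Fisher. It is not the Gauss--Newton matrix of \citet{foresee1997gauss}, which is $J^\top J$ with $J$ the Jacobian of the model outputs, and whose dropped term is $\sum_i r_i\nabla^2_\theta f_\theta(x_i)$. Take Gaussian regression with fixed variance $\sigma^2$ and residual $r=y-f_\theta(x)$. The per-sample remainder in your decomposition is $\sigma^{-2}(1-r^2/\sigma^2)\nabla f\nabla f^\top-\sigma^{-2}r\nabla^2 f$. This is not proportional to $r$, and as $r\to0$ it tends to the full curvature $\sigma^{-2}\nabla f\nabla f^\top$, not to zero. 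So drop the ``small-residual'' alternative: for your reading, only the information identity under well-specification works.

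This matters for how the corollary is used. At the self-prediction $\hat y_{n+1}=f_{\hat\theta}(x_{n+1})$, the pseudo-observation has zero score. It therefore contributes nothing to $\tilde G(\hat\theta)^\top\tilde G(\hat\theta)$, while it contributes $\sigma^{-2}\nabla f\nabla f^\top$ to the observed information. That $O(1/n)$ discrepancy is invisible at the matrix level. But (A)SSLA uses the difference $\log|\tilde{\mathcal J}|-\log|\mathcal J|$, which is itself $O(1/n)$ and driven by exactly this term. In the normal--normal model of Appendix~\ref{app:experiments}, the exact increment is $\log\frac{n+1}{n}$, whereas the score-based version gives $0$. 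Hence your closing remark, that uniform positive definiteness makes the approximation stable for the log-determinants in \eqref{eq:laplace3}, does not follow. Under the output-Jacobian reading of $\tilde G$, the pseudo-point term is exact at $r=0$ and the problem disappears.
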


Corollary \ref{cor-lik-approx2} implies that we only have to compute the Fisher information for $(x_{n+1}, \hat y_{n+1})$ with given parameters $\hat \theta$. For the example of a BNN, this requires only one pass through the non-Bayesian neural network with weights $\hat \theta$.

The posterior predictive distribution under ASSLA therefore becomes:
\begin{equation} \label{eq:assla}
    \begin{aligned}
    \log(p(\hat y_{n+1}|x_{n+1}, D)) &\approx \tilde \ell (\hat \theta) + \frac{1}{2} \log(|\mathcal{J}(\hat \theta)|) - \ell(\hat \theta) - \frac{1}{2}\log(|\tilde{\mathcal{J}}(\hat \theta)|)
    \end{aligned}
\end{equation}
due to the fact that the difference between $\pi(\tilde \theta)$ and $\pi(\hat \theta)$ is of order $O\left(\frac{Ln + 1}{n^2}\right)$, see Corollary~\ref{cor-lik-approx}. Recall from this Corollary~\ref{cor-lik-approx} that the functional form of the prior itself does not depend on $(x_{n+1}, \hat{y}_{n+1})$. Thus, its contribution is subleading relative to the likelihood and curvature increments and can be neglected at the level of approximation used in ASSLA. In contrast, both the likelihood and the Fisher information change at leading order and therefore remain in the approximation.
This formulation bypasses the computational overhead of refitting the model. Due to $\tilde \ell(\theta) = \ell_{(x_{n+1},\hat y_{n+1})}(\theta) + \ell(\theta)$, \eqref{eq:assla} becomes
\begin{equation}
    \log(p(\hat y_{n+1}|x_{n+1}, D)) \approx \ell_{(x_{n+1},\hat y_{n+1})}(\hat \theta) + \frac{1}{2} \log(|\mathcal{J}(\hat \theta)|) - \frac{1}{2}\log(|\tilde{\mathcal J}(\hat \theta)|)
\end{equation}
allowing us to bypass the calculation of the log-likelihood of the training data $\ell(\hat \theta)$.

\paragraph{Why the prior increment can be dropped in ASSLA:}
Equation~\ref{eq:laplace3} contains the prior increment $\log\pi(\tilde\theta)-\log\pi(\hat\theta)$.
In ASSLA, we replace the refit optimum $\tilde\theta$ by the original optimum $\hat\theta$ and keep only leading-order terms in the expansion.
By Corollary~\ref{cor-lik-approx} (and Assumption~(A3)), the shift $\Delta=\tilde\theta-\hat\theta$ is $O(n^{-1})$, and since the prior’s functional form is unaffected by $\hat y_{n+1}$, the prior increment is of higher order
\[
\log\pi(\tilde\theta)-\log\pi(\hat\theta) = O(\|\Delta\|) = O(n^{-1})
\quad
\]
or smaller under the stated bounds. In contrast, the likelihood and curvature increments remain leading-order at the approximation order used in ASSLA, and are therefore retained.
This explains the apparent disconnect: the \emph{SSLA} decomposition contains the prior increment, while the \emph{ASSLA} simplification drops it as a higher-order term at the chosen approximation order.

\section{Experiments}

We first verify our method in the conjugate case for regression tasks. That is, we use prior distributions which are conjugate to the likelihood and therefore result in a posterior predictive distribution (PPD) that can be computed analytically. We can thus determine whether (A)SSLA provides reliable and close approximations of the true underlying PPDs in a controlled manner, before scaling to more complex models. We compare three types of conjugate prior models: The normal-normal model, the Poisson-gamma model, and a conjugate Bayesian linear regression. All results and detailed descriptions of the experimental setup can be found in appendix~\ref{app:experiments}. 

To sum up, SSLA and ASSLA are able to recover the analytic PPD close to perfect for sample sizes between $n=20$ and $n = 100 000$. Numerical instabilities emerged in our experiments for $n \geq 1 000 000$ for ASSLA. The reason for these instabilities primarily lies in the curvature term. Table~\ref{tab:intermediate_regime} shows that subtract-two-aggregates computation in float64 yields max $|\log|$-errors on the order of $10^{-11}$--$10^{-10}$ across all $n$, confirming that the observed degradation in float32 is due to finite-precision cancellation rather than a methodological limitation of ASSLA. However, SSLA is still able to match the analytic PPD closely in these cases.\footnote{The transition between the regimes $n \leq 10^5$ and $n \geq 10^6$ is gradual. Additional experiments (see Table~\ref{tab:intermediate_regime}) in the conjugate normal--normal model for $n \in \{2\cdot 10^5, 3\cdot 10^5, 5\cdot 10^5, 7\cdot 10^5, 9\cdot 10^5\}$ show that ASSLA remains stable but exhibits increasing numerical error as $n$ grows. The degradation is primarily caused by finite-precision cancellation: ASSLA computes predictive increments by subtracting two large aggregated log-likelihood terms in \texttt{float32}, which can collapse small differences to zero. Repeating the same computation in \texttt{float64} eliminates this effect, confirming that the instability is numerical rather than methodological.} For the normal-normal model, Figure~\ref{fig:normal_normal_model} illustrates the comparison of the posterior predictive densities for different sample sizes. Similar visualization for the other conjugate cases can be found in appendix~\ref{app:experiments}.

\begin{figure}[h]
    \centering
    \includegraphics[width=0.9\linewidth]{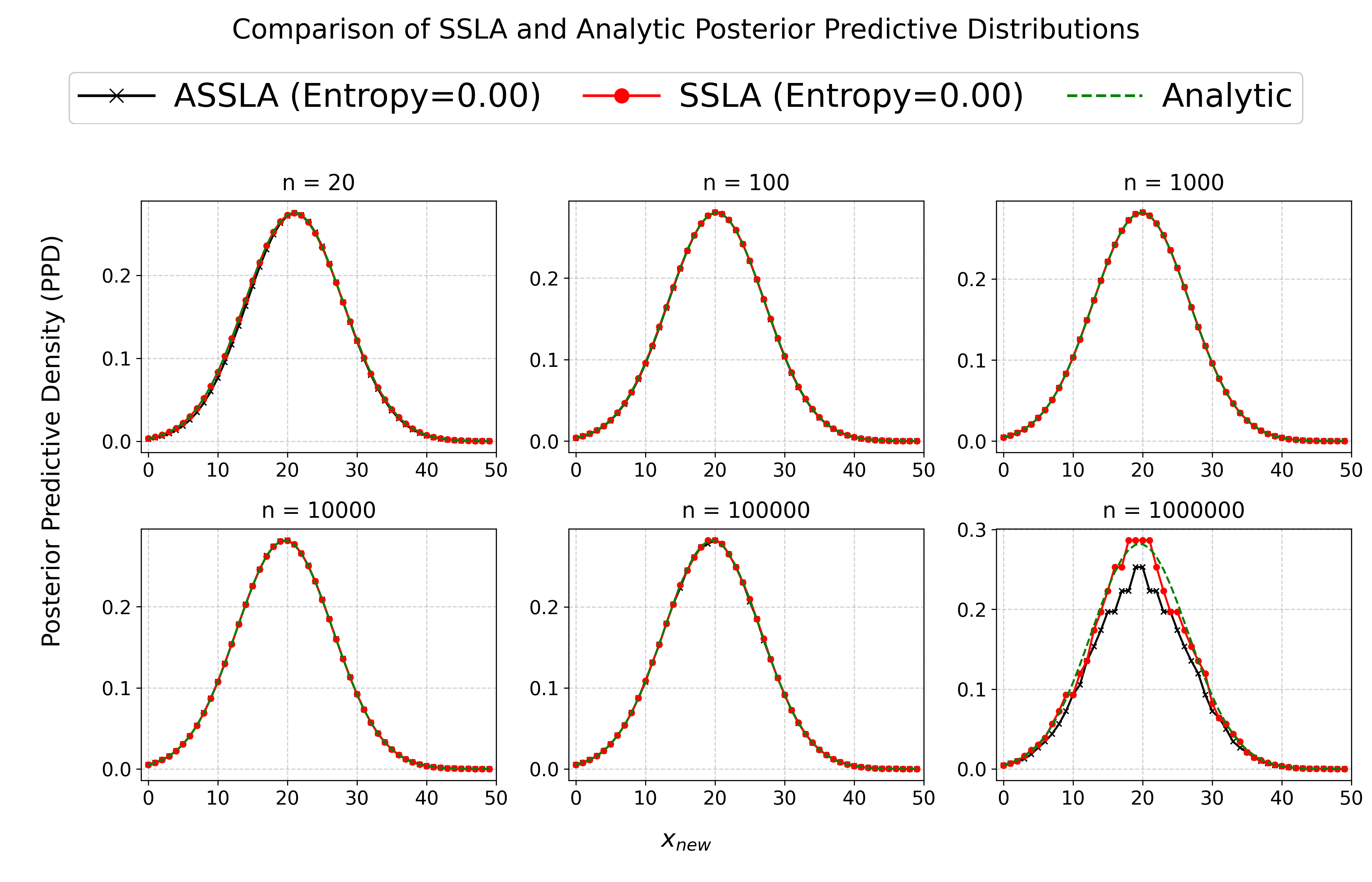}
    \caption[Normal-Normal Model: Comparison of the PPD]{Conjugate normal-normal model: Six comparisons of SSLA (red) and ASSLA (black) to analytic posterior predictive distribution (green) for varying sample sizes $n$ ranging from $n=20$ to $n=1 000 000$. SSLA and ASSLA are able to closely match the analytic distributions, highlighted by low entropy scores.}
    \label{fig:normal_normal_model}
\end{figure}


After having validated our approximation for the conjugate case, we conduct a twofold benchmark study, consisting of simulated and real world data.

\subsection{Simulated Heteroscedastic Regression in Neural Networks}

Building on the validation in conjugate settings, we next consider a controlled heteroscedastic regression task to evaluate how well uncertainty quantification methods adapt when the noise variance is input-dependent. We compare SSLA and ASSLA to a suite of established approximate Bayesian inference techniques: the classical Laplace approximation \citep{daxberger2021laplace}, variational inference including both the mean-field BNN of \citet{blundell2015} and the more expressive VI model of \citet{pmlr-v80-depeweg18a}, and Hamiltonian Monte Carlo implemented via \texttt{hamiltorch} \citep{hamiltorch} as a sampling-based reference. 

The synthetic data are generated with the \texttt{ToyHeteroscedasticDataModule} from Lightning-UQ-Box \citep{Lehmann_Lightning_UQ_Box_2024}. Inputs \(x\) are drawn from a mixture of three Gaussians (centers at \(x_{\min}=-4\), \(0\), and \(x_{\max}=4\) with standard deviations \(0.4\), \(0.9\), and \(0.4\), respectively), and targets follow 
\begin{equation}
y = 7\sin(x) + \epsilon,\qquad \epsilon = 3\left|\cos\left(\frac{x}{2}\right)\right|\cdot \mathcal{N}(0,1),
\end{equation}
so that the noise amplitude varies smoothly with \(x\). A multilayer perceptron \citep{bishop2006pattern,peng2017multilayerperceptronalgebra} is used as the predictive model; architectural choices, loss definitions, and hyperparameter selection are detailed in Appendix~\ref{app:het_reg}. For the Laplace-based methods (including SSLA and ASSLA), the observed Fisher information is approximated via three covariance representations: diagonal, Kronecker-factored (KFAC), and dense. We additionally report Lightning UQ Box \citep{Lehmann_Lightning_UQ_Box_2024} results in Appendix~\ref{add-exper}.

Figure~\ref{fig:heterosced_plot} depicts the posterior predictive means with associated credible bands, and Table~\ref{tab:hetero_coverage} reports empirical coverage at nominal confidence levels of 95\%, 90\%, 75\%, and 50\%. The classical Laplace approximation exhibits conservative calibration, producing wide, risk-averse intervals. SSLA’s uncertainty estimates vary depending on the covariance approximation and suffer from instability in calibration. 

ASSLA yields smoother and comparatively tighter credible regions, particularly in narrower intervals, but systematically underestimates uncertainty in wider intervals, leading to a mild risk-seeking bias. This behavior reflects the core approximation in ASSLA: replacing the augmented optimum $\tilde{\theta}$ by the original fit $\hat{\theta}$ neglects refitting-induced curvature changes. When the pseudo-observation would have shifted the optimum non-negligibly, ASSLA produces overly concentrated predictive densities.
Consequently, ASSLA balances computational tractability with competitive calibration in moderate regimes, but may under-cover at high nominal levels in settings where refitting effects are substantial. In such regimes, SSLA—or more conservative approximations—should be preferred. 

The flexible variational inference model tracks nominal coverage most closely with minimal systematic deviation, whereas MFVI collapses toward the predictive mean and manifests risk-averse behavior. HMC, despite being a theoretically grounded baseline, reflects practical linearization limitations in this implementation and delivers intermediate coverage (e.g., approximately 76\% at the 75\% level and 68\% at 50\%), capturing the underlying structure without extreme dispersion.

\begin{table}[t]
\centering
\small
\begin{tabular}{r r c c c c}
\toprule
$n$ & runs & KL (mean $\pm$ sd) & Max $|\log|$ err (mean $\pm$ sd) & Frac($\delta{=}0$) & Frac(nonfinite) \\
\midrule
100{,}000   & 6 & $9.41{\times}10^{-6}\pm1.3{\times}10^{-6}$ & $7.57{\times}10^{-3}\pm3.1{\times}10^{-4}$ & 0.00 & 0.00 \\
200{,}000   & 6 & $5.02{\times}10^{-5}\pm2.3{\times}10^{-6}$ & $1.55{\times}10^{-2}\pm2.7{\times}10^{-5}$ & 0.00 & 0.00 \\
300{,}000   & 6 & $1.41{\times}10^{-4}\pm3.3{\times}10^{-6}$ & $3.05{\times}10^{-2}\pm5.1{\times}10^{-4}$ & 0.00 & 0.00 \\
500{,}000   & 6 & $1.40{\times}10^{-4}\pm3.3{\times}10^{-6}$ & $3.06{\times}10^{-2}\pm3.1{\times}10^{-4}$ & 0.00 & 0.00 \\
700{,}000   & 6 & $4.75{\times}10^{-4}\pm4.6{\times}10^{-7}$ & $5.86{\times}10^{-2}\pm2.1{\times}10^{-3}$ & 0.00 & 0.00 \\
900{,}000   & 6 & $4.76{\times}10^{-4}\pm2.0{\times}10^{-6}$ & $5.72{\times}10^{-2}\pm2.7{\times}10^{-3}$ & 0.00 & 0.00 \\
1{,}000{,}000 & 6 & $4.75{\times}10^{-4}\pm9.2{\times}10^{-7}$ & $5.69{\times}10^{-2}\pm2.6{\times}10^{-3}$ & 0.00 & 0.00 \\
\bottomrule
\end{tabular}
\caption{Intermediate regime $n\in[10^5,10^6]$ in the Normal--Normal conjugate setting: the transition is gradual rather than abrupt. Reported numbers compare a numerically stable reference computation against an ASSLA float32 computation, which illustrates finite-precision cancellation effects.}
\label{tab:intermediate_regime}
\end{table}

These quantitative and qualitative findings expose a nuanced trade-off surface: SSLA can approximate coverage in certain regimes but is hampered by stability concerns; ASSLA strikes a favorable balance in computational efficiency and interval smoothness relative to classical Laplace and HMC, yet its broader underestimation in wide intervals requires caution; variational inference provides the most balanced empirical calibration; and the standard Laplace method remains the most conservative. Together with the earlier conjugate-case insights, this comparison refines practical guidance for selecting posterior-predictive approximation techniques under heteroscedastic neural regression. \citep{staber2023benchmarkingbayesianneuralnetworks}

\begin{figure}
    \centering
    \includegraphics[width=1\linewidth]{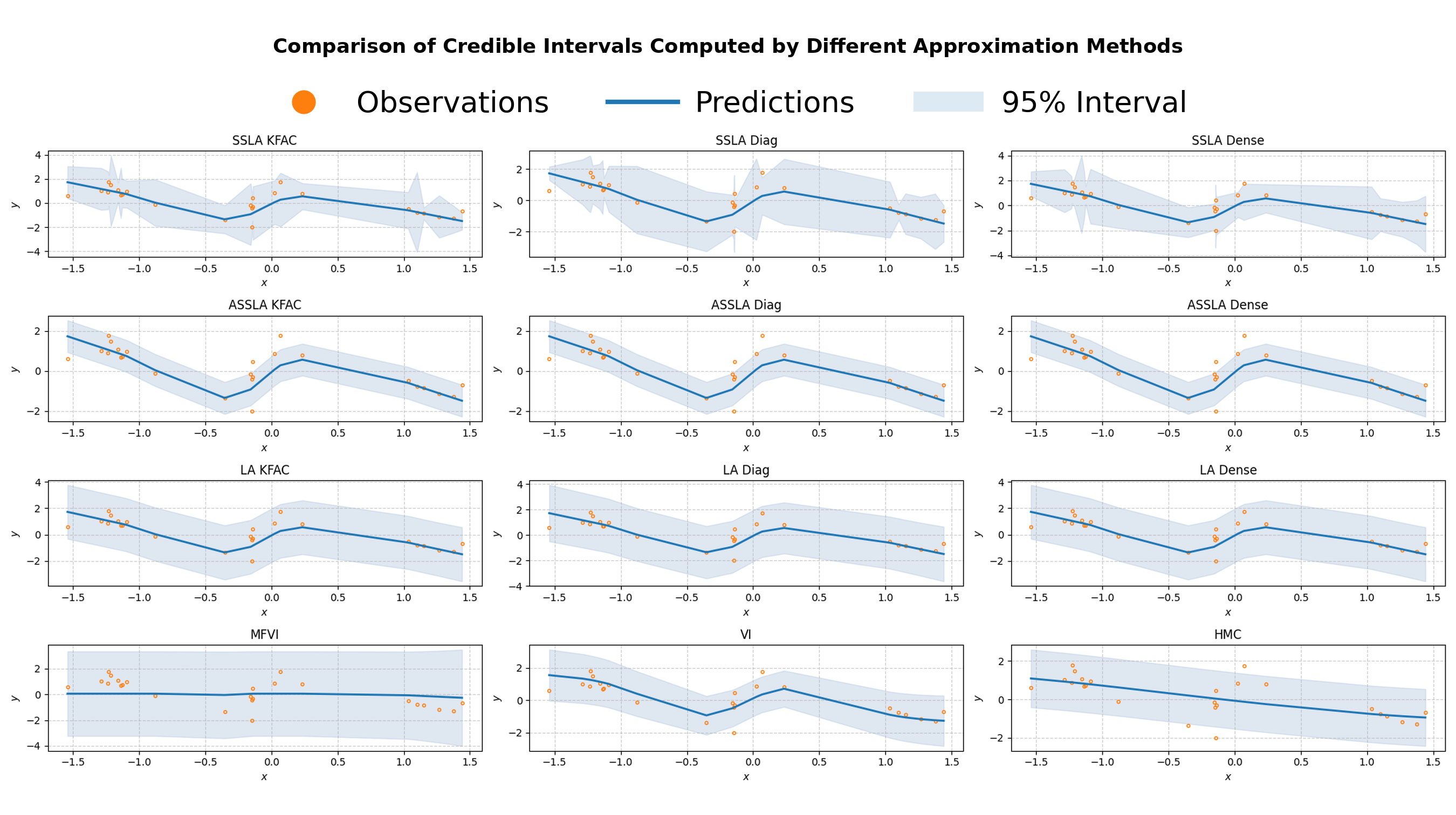}
    \caption[Heteroscedastic Regression: Comparison of uq-methods]{We display the predictive uncertainty intervals for various uncertainty quantification methods, namely SSLA, ASSLA, LA, VI, and MCMC. The shaded blue areas represent the $95\%$ credible intervals, with dots representing observations and the solid blue line indicating the model's predictions.}
    \label{fig:heterosced_plot}
\end{figure}
 
\begin{table*}[ht]
    
    \centering
    \resizebox{1\textwidth}{!}{%
        \begin{tabular}{l|ccc|ccc|ccc|cc|c}
        & \multicolumn{3}{c}{SSLA} & \multicolumn{3}{|c|}{ASSLA} & \multicolumn{3}{c}{LA} & \multicolumn{2}{|c|}{VI} & \multicolumn{1}{|c}{MCMC} \\ \midrule
        CI & KFAC & DIAG & DENSE & KFAC & DIAG & DENSE & KFAC & DIAG & DENSE & VI & MFVI & HMC \\ \midrule
        \hline 
        95\% & 88.0 & 92.0 & 92.0 & 76.0 & 76.0 & 76.0 & 100.0 & 100.0 & 100.0 & 92.0 & 100.0 & 88.0 \\
        90\% & 84.0 & 92.0 & 88.0 & 68.0 & 68.0 & 68.0 & 100.0 & 100.0 & 100.0 & 92.0 & 100.0 & 88.0 \\
        75\% & 76.0 & 80.0 & 76.0 & 60.0 & 60.0 & 60.0 & 92.0 & 92.0 & 92.0 & 80.0 & 96.0 & 76.0 \\
50\% & 72.0 & 64.0 & 56.0 & 56.0 & 56.0 & 56.0 & 68.0 & 68.0 & 68.0 & 64.0 & 80.0 & 68.0 \\
\bottomrule
        \end{tabular}
        }
    \caption[Heteroscedastic Regression: Coverage Table]{We report the coverage of different methods at various confidence intervals for heteroscedastic regression. The table shows the coverage percentages for SSLA, ASSLA, LA, VI, and MCMC methods, evaluated using KFAC, DIAG, and DENSE for each approach at the $95\%$, $90\%$, $75\%$, and $50\%$ confidence intervals.}
    \label{tab:hetero_coverage}
\end{table*}

\begin{table}[t]
\centering
\small
\begin{tabular}{p{3.2cm} p{7.2cm} c c}
\toprule
\textbf{Dataset} & \textbf{Description} & \textbf{Size}
& \textbf{Subsampling ($n=50$)} \\
\midrule

Concrete Compressive Strength \citep{concrete_compressive_strength_165} &
Modeling the compressive strength of concrete using $8$ features. &
Medium & \textcolor{green}{\Checkmark} \\ \hline

Wine Quality \citep{wine_quality_186} &
Portuguese ``Vinho Verde'' wine, with quality scores ranging from 0 to 10. &
Medium & \textcolor{green}{\Checkmark} \\ \hline

Bike Sharing \citep{bike_sharing_275} &
Prediction of total rental bike counts using hourly and daily data. &
Medium & \textcolor{green}{\Checkmark} \\ \hline

Airfoil Self-Noise \citep{airfoil_self-noise_291} &
NASA aerodynamic and acoustic test results for airfoil blade sections
\citep{Moin_2022}. &
Medium & \textcolor{green}{\Checkmark} \\ \hline

Auto MPG \citep{auto_mpg_9} &
Predicting the ``mpg'' attribute using $7$ features. &
Small & \textcolor{red}{\XSolidBrush} \\ \hline

Liver Disorders \citep{liver_disorders_60} &
Blood test results used to predict daily alcoholic beverage consumption. &
Small & \textcolor{red}{\XSolidBrush} \\ \hline

Daily Demand Forecasting \citep{daily_demand_forecasting_orders_409} &
Daily demand forecasting in a Brazilian logistics company. &
Small & \textcolor{red}{\XSolidBrush} \\ \hline

Real Estate Valuation \citep{real_estate_valuation_477} &
Historical market data for real estate valuation in Taiwan. &
Small & \textcolor{red}{\XSolidBrush} \\
\bottomrule
\end{tabular}
\caption[UCIMLRepo Datasets]{Overview of datasets used for benchmarking ASSLA and SSLA from UCIMLRepo \citep{ucimlrepo}. Subsampling is applied to medium-sized datasets.}
\label{tab:uciml_dataset}
\end{table}

\subsection{Case Study on Real-World Data}

To assess the empirical viability of (A)SSLA beyond controlled synthetic settings, we apply the methods to a diverse set of real-world regression tasks from the UCI Machine Learning Repository \citep{ucimlrepo}. These datasets span domains and scales, providing a practical stress test for uncertainty quantification under realistic data artifacts and varying sample regimes. Table~\ref{tab:uciml_dataset} summarizes the datasets, their modifications, and the subsampling strategy; SSLA is restricted to small datasets (fewer than 500 observations) due to its computational overhead, whereas ASSLA is employed more broadly with subsampling on medium-sized data to maintain tractability.
\begin{table}
\centering
\small
\begin{tabular}{lllllll}
  &  & \multicolumn{3}{c}{Coverage} & \multicolumn{2}{c}{} \\ 
\cline{3-5} 
Dataset & Method & $95\%$ & $75\%$ & $50\%$ & NLL $\downarrow$ & CRPS $\downarrow$ \\\hline
 \multirow{3}{*}{Auto MPG} & SSLA & 100.00 & 91.14 & 74.68 & 0.45 & 0.19 \\
 & ASSLA & 100.00 & 82.28 & 69.62 & 0.32 & 0.18 \\
 & LA & 100.00 & 100.00 & 94.94 & 0.99 & 0.28 \\\hline
\multirow{3}{*}{Liver Disorders} & SSLA & 73.91 & 44.93 & 28.99 & 2.74 & 0.79 \\
 & ASSLA & 49.28 & 24.64 & 15.94 & 5.94 & 0.85 \\
 & LA & 91.30 & 66.67 & 43.48 & 1.68 & 0.73 \\\hline
\multirow{3}{*}{Concrete Compressive Strength} & SSLA & 98.00 & 98.00 & 82.00 & 0.43 & 0.19 \\
 & ASSLA & 98.00 & 90.00 & 72.00 & 0.38 & 0.18 \\
 & LA & 100.00 & 98.00 & 98.00 & 1.00 & 0.28 \\\hline
\multirow{3}{*}{Wine Quality} & SSLA & 98.00 & 88.00 & 72.00 & 0.49 & 0.21 \\
 & ASSLA & 94.00 & 84.00 & 62.00 & 0.39 & 0.20 \\
 & LA & 100.00 & 100.00 & 90.00 & 0.99 & 0.28 \\\hline
\multirow{3}{*}{Bike Sharing} & SSLA & 100.00 & 100.00 & 100.00 & 0.13 & 0.11 \\
 & ASSLA & 100.00 & 100.00 & 100.00 & 0.00 & 0.09 \\
 & LA & 100.00 & 100.00 & 100.00 & 0.92 & 0.23 \\\hline
\multirow{3}{*}{Airfoil Self-Noise} & SSLA & 98.00 & 94.00 & 88.00 & 0.51 & 0.19 \\
 & ASSLA & 96.00 & 90.00 & 84.00 & 0.24 & 0.16 \\
 & LA & 100.00 & 100.00 & 94.00 & 0.97 & 0.27 \\\hline
\multirow{3}{*}{Daily Demand Forecasting Orders} & SSLA & 91.67 & 91.67 & 91.67 & 0.44 & 0.18 \\
 & ASSLA & 91.67 & 91.67 & 83.33 & 0.34 & 0.17 \\
 & LA & 100.00 & 100.00 & 91.67 & 1.06 & 0.29 \\\hline
\multirow{3}{*}{Real Estate Valuation} & SSLA & 97.59 & 93.98 & 79.52 & 0.96 & 0.32 \\
 & ASSLA & 90.36 & 74.70 & 50.60 & 0.74 & 0.26 \\
 & LA & 100.00 & 97.59 & 86.75 & 1.05 & 0.32 \\\hline 
\end{tabular}
\caption[UCIML: Coverage Table]{Coverage, NLL, and CRPS results for SSLA, ASSLA, and LA on various UCIMLRepo datasets}
\label{tab:uciml_results}
\end{table}

All datasets are processed through a unified pipeline: binary features are encoded as \(0/1\), categorical variables are one-hot expanded, and continuous inputs and targets are standardized to zero mean and unit variance. Several dataset-specific adaptations reframe the raw data into regression tasks or emphasize robustness: the Wine Quality data incorporates the color attribute and predicts alcohol content rather than quality; Bike Sharing discards the datetime column and predicts normalized ambient temperature; Daily Demand Forecasting Orders is treated as a regression problem via random sampling of train/test splits; Liver Disorders predicts alkaline phosphatase (ALP) instead of alcohol intake, reflecting a biomedically relevant proxy for liver/bone pathology \citep{alp}; and Auto MPG omits the non-informative \texttt{car\_names} field.

The predictive model mirrors the heteroscedastic setting: a multilayer perceptron with two hidden layers of 50 units each and ReLU nonlinearities is trained to represent both mean and uncertainty. Hyperparameters are selected via Optuna, searching learning rates in \([1\times10^{-5},1\times10^{-1}]\) and batch sizes from 32 to 256 over 100 trials, with validation negative log-likelihood governing selection. For SSLA we assume a standard normal prior on weights, apply last-layer Laplace approximation with post-hoc tuning as in \citet[][Chapter 4.1]{daxberger2021laplace}, and approximate the observed Fisher information using KFAC. All remaining training and loss configurations are consistent with the heteroscedastic regression experiment.

Table~\ref{tab:uciml_results} reports empirical coverage at nominal 95\%, 75\%, and 50\% credible intervals alongside negative log-likelihood (NLL) and continuous ranked probability score (CRPS), offering both calibration and sharpness diagnostics. The methods exhibit dataset-dependent trade-offs. In Auto MPG both SSLA and ASSLA achieve full coverage at 95\%, but ASSLA attains substantially lower NLL than SSLA and the classical Laplace approximation, indicating tighter yet well-calibrated uncertainty. The Liver Disorders task exposes a failure mode: both SSLA and ASSLA underperform relative to standard Laplace, with ASSLA showing pronounced undercoverage and inflated NLL, suggesting its adjustment mechanism can over-compress uncertainty when the signal-to-noise ratio is poor.

On Concrete Compressive Strength and Wine Quality, SSLA and ASSLA reach near-nominal high-level coverage while delivering considerably better NLL and CRPS than the overly conservative Laplace baseline, which produces wide intervals that dilute practical informativeness. The Bike Sharing dataset is a strong success case for ASSLA: all methods achieve perfect coverage, yet ASSLA yields near-zero NLL and the lowest CRPS, demonstrating its ability to produce highly concentrated, calibrated predictive distributions in favorable signal regimes. For Airfoil Self-Noise and Daily Demand Forecasting Orders, ASSLA consistently improves upon SSLA in efficiency (lower NLL) with comparable coverage, reinforcing its robustness in medium-scale real-world data. In Real Estate Valuation, SSLA slightly edges ASSLA in coverage at 95\%, but ASSLA achieves a more favorable balance between uncertainty calibration and predictive quality than the conservative Laplace approach, whose inflated uncertainty (and higher NLL/CRPS) reduces sharpness.

These empirical results show a nuanced performance landscape. This kind of multi-criteria evaluation naturally leads to comparisons that need not induce a total order \citep{jansen-gsd-24,jansen2024statistical,rodemann2024partial,arias2025statistical,arias2025towards}; we thus discuss the arising trade-offs in what follows.
 ASSLA frequently delivers sharper and better-calibrated uncertainty estimates than classical Laplace, particularly where the latter’s risk-aversion would degrade downstream utility. SSLA can sometimes yield marginally higher coverage but incurs higher computational cost and may suffer instability in more challenging regimes. The degradation on Liver Disorders highlights that ASSLA’s adjustment may overfit uncertainty compression when evidence is weak, indicating a regime where fallback to more conservative approximations or hybrid regularization might be necessary. Overall, the case study corroborates ASSLA as a practically appealing method for real-world regression: it balances computational tractability with competitive probabilistic calibration across diverse settings while signaling scenarios requiring caution \citep{staber2023benchmarkingbayesianneuralnetworks}.

\section{Limitations and Conclusion}

In this work, we proposed to shift the focus of approximate Bayesian inference from the intractable parameter posterior to the posterior predictive distribution, which is the quantity of primary practical interest for uncertainty-aware prediction. Our Self-Supervised Laplace Approximation (SSLA) quantifies predictive uncertainty by refitting on model-generated (self-predicted) data: predictions that the model assigns high likelihood are identified with low uncertainty, and vice versa. This self-supervised mechanism is modular in the prior, enabling sensitivity analysis across prior choices. To reduce the computational burden of refitting, we derived an approximate variant (ASSLA) that leverages asymptotic expansions and local linearization to express the posterior predictive in terms of quantities evaluated at the original fit, avoiding expensive re-optimization.

Theoretical results characterize the approximation error and justify replacing the augmented posterior mode with the original mode under mild regularity, controlling deviations in the likelihood, Fisher information, and prior. Empirically, we benchmark SSLA and ASSLA across a hierarchy of settings: from conjugate analytic models—where ground-truth posterior predictives are available—to controlled synthetic heteroscedastic regression and a diverse suite of real-world regression tasks. Comparisons include classical Laplace, variational inference (including mean-field and more expressive variants), and Hamiltonian Monte Carlo, evaluating both calibration (coverage) and sharpness.
In controlled settings such as conjugate prior scenarios, both SSLA and ASSLA
closely approximate the analytical posterior predictive distribution. In more complex
environments like heteroscedastic regression and real-world datasets, ASSLA generally
delivers smoother and more computationally efficient uncertainty estimates than SSLA,
while traditional methods such as LA often exhibit overly conservative behavior. 

However, several challenges remain. SSLA can become unstable on larger datasets or in the presence
of prior-data conflicts \citep{evans2006checking,walter2009imprecision,marquardt2023empirical}, and ASSLA sometimes underestimates uncertainty in regions with
high variance. These findings indicate that while (A)SSLA are promising, they require
further refinement to handle all scenarios robustly.
First, refining the covariance approximation and addressing numerical instabilities could
substantially enhance the reliability of these methods. In some scenarios, it may even
be possible to circumvent the full computation of the FIM or to approximate it more
efficiently, potentially yielding significant performance gains. Second, given the prior
modularity in SSLA, it would be interesting to explore its application within the framework of imprecise probability. In this setting, one could leverage extreme points of prior
credal sets to derive an imprecise posterior predictive distribution.

Together, these studies demonstrate that ASSLA in particular often achieves a favorable trade-off between computational efficiency and predictive uncertainty quality, producing sharper and better-calibrated predictive distributions than standard Laplace in many regimes, while SSLA provides a more faithful (but costlier) self-refitting baseline. 

More generally
SSLA/ASSLA do not guarantee that the model likelihood is reliable under arbitrary misspecification or severe dataset shift: like any Bayesian or likelihood-based uncertainty quantification method, they inherit the assumptions of the chosen likelihood model.
Conditional on the likelihood being a meaningful local model around $\hat\theta$, SSLA/ASSLA quantify uncertainty primarily through \emph{changes} induced by the pseudo-observation—objective and curvature increments—rather than relying on the absolute log-likelihood value alone.
Under drift, these increments can still surface atypical curvature behavior (e.g., poor conditioning and sensitive log-determinants), but any likelihood-based score must be interpreted with caution.
In practice, SSLA/ASSLA should be used alongside drift/OOD detection and, where appropriate, robust likelihood modeling or likelihood tempering.





\appendix

\section{Proofs}\label{proofs}

\begin{proof}[Proof of Lemma \ref{lemma-approx}]
By definition, $\hat{\theta}$ and $\tilde{\theta}$ satisfy
\[
\nabla \ell_D(\hat{\theta}) + \nabla \log \pi(\hat{\theta}) = 0,
\qquad
\nabla \tilde{\ell}(\tilde{\theta}) + \nabla \log \pi(\tilde{\theta}) = 0,
\]
where $\tilde{\ell}(\theta) = \ell_D(\theta) + \log p(\hat{y}_{n+1}\mid x_{n+1},\theta)$.
Subtracting the two equations and applying a first-order Taylor expansion of the gradients around $\hat{\theta}$ yields
\[
0
= \nabla \tilde{\ell}(\tilde{\theta}) + \nabla \log \pi(\tilde{\theta})
= \nabla \tilde{\ell}(\hat{\theta}) + \nabla \log \pi(\hat{\theta})
+ \tilde{J}(\bar{\theta})(\tilde{\theta}-\hat{\theta}),
\]
for some $\bar{\theta}$ on the line segment between $\hat{\theta}$ and $\tilde{\theta}$.

By Assumptions (A1)–(A3), $\tilde{J}(\bar{\theta})$ is positive definite with eigenvalues bounded below, hence invertible with uniformly bounded inverse. Moreover,
\[
\nabla \tilde{\ell}(\hat{\theta}) + \nabla \log \pi(\hat{\theta})
= \nabla \ell_D(\hat{\theta}) + \nabla \log \pi(\hat{\theta})
+ \nabla \log p(\hat{y}_{n+1}\mid x_{n+1},\hat{\theta})
= \nabla \log p(\hat{y}_{n+1}\mid x_{n+1},\hat{\theta}),
\]
which is $O(1)$.

Since $\tilde{J}(\bar{\theta}) = O(n)$, it follows that
\[
\tilde{\theta} - \hat{\theta}
= -\tilde{J}(\bar{\theta})^{-1}
\nabla \log p(\hat{y}_{n+1}\mid x_{n+1},\hat{\theta})
= O(n^{-1}),
\]
which proves the claim.
\end{proof}

\begin{proof}[Proof of Theorem \ref{thm-approx-lik}]
We work under assumptions (A1)--(A3):
\begin{enumerate}[label=(A\arabic*),leftmargin=*,itemsep=0pt,topsep=2pt]
\item $\ell_D(\theta)=\sum_{i=1}^n \ell_i(\theta)$ with each $\ell_i$ twice continuously differentiable in a neighborhood of $\hat\theta$.
\item The observed curvature $\mathcal J_D(\hat\theta)$ is positive definite with eigenvalues bounded below by $\lambda_{\min}>0$.
\item The prior log-density $\log\pi(\theta)$ is twice continuously differentiable with bounded gradient and Hessian in a neighborhood of $\hat\theta$.
\end{enumerate}
In addition, we use the Lipschitz condition on the single-point term stated in Theorem~1.

    First, notice that $\tilde \ell$ \textit{contains} $\ell$, that is, 
    $$\tilde \ell(\theta) := \ell_{(x_{n+1}, \hat y_{n+1})}(\theta)  + \ell(\theta).$$
    In order to simplify the computation of $\tilde \ell(\tilde \theta)$, we expand $\ell$ around its maximizer $\hat \theta$, so that 
\begin{equation}\label{eq-complexity}
    \ell(\tilde \theta) =  \ell(\hat \theta) + O(\|\hat \theta - \tilde \theta\|_{2}^2)
\end{equation}
By Lemma \ref{lemma-approx}, we know that since  $D \cup \{(x_{n+1}, \hat y_{n+1})\}$ and $D$ differ in only one sample, the difference $\hat \theta - \tilde \theta$ is of order $O(n^{-1})$. Combining this with \eqref{eq-complexity}, we have

\begin{equation*}
     \tilde \ell(\tilde \theta) = \ell_{(x_{n+1}, \hat y_{n+1})}(\tilde \theta) + \ell(\hat \theta) + O(n^{-2}),
\end{equation*}

which entails

\begin{equation}\label{crucial-eq1}
     \tilde \ell(\tilde \theta) \approx \ell_{(x_{n+1}, \hat y_{n+1})}(\tilde \theta) + \ell(\hat \theta).
\end{equation}

We then ask ourselves if $\ell_{(x_{n+1}, \hat y_{n+1})}(\tilde \theta)$ can be approximated by $\ell_{(x_{n+1}, \hat y_{n+1})}(\hat \theta)$.
Recall that the difference $\hat \theta - \tilde \theta$ is of order $O(n^{-1})$, as $D \cup \{(x_{n+1}, \hat y_{n+1})\}$ and $D$ differ in only one sample. In addition, the likelihood function $\ell_{(x_{n+1}, \hat y_{n+1})}(\cdot)$ is Lipschitz-continuous in $\theta$. This holds per assumption of Lipschitz-continuous loss. 

Now consider the Lipschitz bound

\begin{equation}\label{lish-eq}
   \left \| \ell_{(x_{n+1}, \hat y_{n+1})}(\tilde \theta) - \ell_{(x_{n+1}, \hat y_{n+1})}(\hat \theta) \right \|_2  \leq L \cdot \| \hat \theta - \tilde \theta \|_2,
\end{equation}

where $L$ is a Lipschitz constant. 
Exploiting the fact that $\hat \theta - \tilde \theta$ is of order $O(n^{-1})$, Equation \eqref{lish-eq} entails that 

\begin{equation}\label{lik-approx}
    \ell_{(x_{n+1}, \hat y_{n+1})}(\tilde \theta) = \ell_{(x_{n+1}, \hat y_{n+1})}(\hat \theta) \pm O\left(\frac{L}{n}\right),
\end{equation}

where the remainder is negligible as $n \to \infty$. Hence, \eqref{lik-approx} implies that 

\begin{equation}\label{lik-approx2}
    \ell_{(x_{n+1}, \hat y_{n+1})}(\tilde \theta) \approx \ell_{(x_{n+1}, \hat y_{n+1})}(\hat \theta).
\end{equation}

Combining \eqref{lik-approx2} with \eqref{crucial-eq1}, we have 

\begin{equation} \label{eq:approx-lik-intermed}
     \tilde \ell(\tilde \theta) \approx \ell_{(x_{n+1}, \hat y_{n+1})}(\hat \theta) + \ell(\hat \theta) = \tilde \ell(\hat \theta)
\end{equation}

and, as a by-product,

\begin{equation*}
      \ell(\tilde \theta) \approx \ell(\hat \theta).
\end{equation*}

We can express Equation (\ref{eq:approx-lik-intermed}), $\tilde \ell(\tilde \theta) \approx \tilde \ell(\hat \theta)$, equivalently in asymptotic (Bachmann–Landau) notation

\begin{equation*}
    \tilde \ell(\tilde \theta) = \tilde \ell(\hat \theta) + O\left(\frac{L}{n}\right) + O(n^{-2}) = \tilde \ell(\hat \theta) + O\left(\frac{Ln + 1}{n^2}\right) 
\end{equation*}

by combining equations (\ref{lik-approx}) and (\ref{crucial-eq1}).

\end{proof}

\begin{proof}[Proof of Corollary \ref{cor-lik-approx}]
This corollary is proved under the same regularity conditions as Theorem~1, namely (A1)--(A3).
Recall from the proof of Theorem \ref{thm-approx-lik} that $\tilde \ell $ contains $\ell_D$. Thus the same holds for their derivatives if they exist. Further recall that $D \cup \{(x_{n+1}, \hat y_{n+1})\}$ and $D$ differ in only one sample. So the difference $\hat \theta - \tilde \theta$ is of order $O(n^{-1})$. By requiring Lipschitz-continuity for the second derivatives and the prior, we can apply the same reasoning as in the proof of Theorem \ref{thm-approx-lik}. Hence, it follows that

\begin{equation*}
        -\tilde \ell^{\prime\prime}(\tilde \theta)/n = -\tilde \ell^{\prime\prime}(\hat \theta)/n + O\left(\frac{Ln + 1}{n^2}\right)  
\end{equation*}

and

\begin{equation*}
    \quad \pi(\tilde \theta) = \pi(\hat \theta) + O\left(\frac{Ln + 1}{n^2}\right),
\end{equation*}

or simply

\begin{equation*}
        \mathcal I(\tilde \theta) = -\tilde \ell^{\prime\prime}(\tilde \theta)/n \approx -\tilde \ell^{\prime\prime}(\hat \theta)/n \quad \text{ and } \quad \pi(\tilde \theta) \approx \pi(\hat \theta).
\end{equation*}

\end{proof}

\begin{proof}[Proof of Corollary \ref{cor-lik-approx2}]
    Immediate from Theorem \ref{thm-approx-lik} and Corollary \ref{cor-lik-approx}.
\end{proof}

 
   \subsection{Highest Density Region Approximation}
   
    Notice that the integral $\int_{\Theta} \exp[- \frac{n}{2} (\theta - \tilde  \theta)^\top  \mathcal{I}(\tilde  \theta) (\theta - \tilde \theta)] \text{d} \theta$ in \eqref{approx1-predictive} is a Gaussian integral, and so $\theta$ follows a multivariate normal distribution \citep{kass1990validity}. In addition, it is well-known that any marginal distribution of a multivariate normal distribution is again a multivariate normal. As a consequence, our approximation of $p( \hat y_{n+1} \mid x_{n+1}, D) $ is a multivariate normal and thus symmetric around its mode. 
    This implies $R(\hat{p}^\alpha)$ is symmetric around a given $\hat y \in\mathcal{Y}$, i.e., $R(\hat{p}^\alpha) = [\hat y - b, \hat y + b]$. This simplifies the solution for $\hat p^\alpha$. We have

\begin{align*}
P [ \hat{Y}_{n+1} \in R(\hat{p}^\alpha) \mid x_{n+1},D ] \geq 1-\alpha &\iff \int_{R(\hat{p}^\alpha)} p( \hat y_{n+1} \mid x_{n+1}, D) \text{d} y \geq 1- \alpha\\ &\iff \int_{\hat y -b}^{\hat y + b} p( \hat y_{n+1} \mid x_{n+1}, D) \text{d} y \geq 1- \alpha.
\end{align*}

Approximating $p( \hat y_{n+1} \mid x_{n+1}, D)$ by Equation \eqref{eq:laplace3}, we obtain

\begin{align}
&\int_{\hat y -b}^{\hat y + b} \exp \left[ \frac{q}{2} \, \log\left(\frac{n}{n+1}\right) + \tilde \ell(\hat \theta) - \frac{1}{2} \log|\mathcal{I}(\hat \theta)| + \frac{1}{2}  \log|\mathcal{I}_{\ell_D}(\hat \theta)| - \ell_{D}(\hat \theta ) + \log \pi(\hat \theta) - \log \pi(\hat \theta) \right] \text{d} y \nonumber\\
&\geq 1- \alpha \nonumber \\ 
\iff &\int_{\hat y -b}^{\hat y + b} \exp \tilde \ell(\hat \theta) \text{d} y \nonumber\\
&\geq (1- \alpha) \exp \left[ -\frac{q}{2} \, \log\left(\frac{n}{n+1}\right) + \frac{1}{2} \log|\mathcal{I}(\hat \theta)| - \frac{1}{2}  \log|\mathcal{I}_{\ell_D}(\hat \theta)| + \ell_{D}(\hat \theta ) + \log \pi(\hat \theta) - \log \pi(\hat \theta) \right] \nonumber \\
\iff &\int_{\hat y -b}^{\hat y + b} \mathcal{L}_{\tilde y, \tilde x}(\hat \theta, y, x) \text{d} y \nonumber\\
&\geq (1- \alpha) \exp \left[ -\frac{q}{2} \, \log\left(\frac{n}{n+1}\right) + \frac{1}{2} \log|\mathcal{I}(\hat \theta)| - \frac{1}{2}  \log|\mathcal{I}_{\ell_D}(\hat \theta)| + \ell_{D}(\hat \theta ) + \log \pi(\hat \theta) - \log \pi(\hat \theta) \right], \label{eq:solveforp}
\end{align}


where $\tilde y := (y_1, \ldots, y_n, \hat y_{n+1})$ and $\tilde x := (x_1, \ldots, x_n, \hat x_{n+1})$. Note that per Lemma \ref{lemma-approx} we have $\log \pi(\hat \theta) - \log \pi(\hat \theta) = 0$ .


Recall that we selected the $L^2$-loss. Then, we have that 

\begin{align}
    \int_{\hat y -b}^{\hat y + b} \mathcal{L}(\hat \theta, y,x) \text{d} y &= \int_{\hat y -b}^{\hat y + b} (y - f(\hat \theta, x))^2 \text{d}y \nonumber \\
    &= \frac{1}{3} ( \hat y + b - f(x, \hat \theta))^3 - \frac{1}{3} ( \hat y - b - f(x, \hat \theta))^3 \nonumber \\
    &= \frac{1}{3} b^3 - \frac{1}{3} (-b)^3 = \frac{2}{3} b^3. \label{eq:solveforp2}
\end{align}

Plugging \eqref{eq:solveforp2} into \eqref{eq:solveforp}, we get that $b$ is lower bounded by

\begin{align}\label{b_star}
\sqrt[3]{\frac{3}{2} (1- \alpha) \exp \left[ - \frac{3q}{2} \, \log\left(\frac{2\pi}{n}\right) - \tilde \ell(\hat \theta) + \frac{1}{2} \log|\mathcal{I}(\hat \theta)| +  \log|\mathcal{I}_{\ell_D}(\hat \theta)| - 2 \ell_{D}(\hat \theta ) - 3 \log \pi(\hat \theta) \right]}. 
\end{align}

Since we are looking for the smallest possible region $R(\hat{p}^\alpha)$, we want the smallest possible value of $b$. Such smallest possible value is exactly the lower bound in \eqref{b_star}, which we denote by $b^\star$.


We can now derive $\hat p^\alpha$,
\begin{align*}\label{eq:p^alpha}
    \hat p^\alpha &= p( \hat y_{n+1} = \hat y +b^\star  \mid x_{n+1}, D)\\
    &=  \left| \hat y + b^\star - f(\hat \theta, x) \right| \exp \left[  \frac{3q}{2} \, \log\left(\frac{2\pi}{n}\right) - \frac{1}{2} \log|\mathcal{I}(\hat \theta)| -  \log|\mathcal{I}_{\ell_D}(\hat \theta)| +2 \ell_{D}(\hat \theta ) + 3 \log \pi(\hat \theta) \right]\\
    &= \left| b^\star \right| \exp \left[  \frac{3q}{2} \, \log\left(\frac{2\pi}{n}\right) - \frac{1}{2} \log|\mathcal{I}(\hat \theta)| -  \log|\mathcal{I}_{\ell_D}(\hat \theta)| +2 \ell_{D}(\hat \theta ) + 3 \log \pi(\hat \theta) \right]\\
    &= \left| \sqrt[3]{\frac{3}{2} (1- \alpha) \exp \left[ - \frac{3q}{2} \, \log\left(\frac{2\pi}{n}\right) + \frac{1}{2} \log|\mathcal{I}(\hat \theta)| +  \log|\mathcal{I}_{\ell_D}(\hat \theta)| - 2 \ell_{D}(\hat \theta ) - 3 \log \pi(\hat \theta) \right]} \right| \\ 
    & \cdot  \exp \left[  \frac{3q}{2} \, \log\left(\frac{2\pi}{n}\right) - \frac{1}{2} \log|\mathcal{I}(\hat \theta)| -  \log|\mathcal{I}_{\ell_D}(\hat \theta)| +2 \ell_{D}(\hat \theta ) + 3 \log \pi(\hat \theta) \right]\\
    &= \left| \sqrt[3]{\frac{3}{2} (1- \alpha)} \exp \left[ - \frac{q}{2} \, \log\left(\frac{2\pi}{n}\right) + \frac{1}{6} \log|\mathcal{I}(\hat \theta)| + \frac{1}{6} \log|\mathcal{I}_{\ell_D}(\hat \theta)| - \frac{2}{3} \ell_{D}(\hat \theta ) -  \log \pi(\hat \theta) \right] \right|  \\ 
    & \cdot  \exp \left[  \frac{3q}{2} \, \log\left(\frac{2\pi}{n}\right) - \frac{1}{2} \log|\mathcal{I}(\hat \theta)| -  \log|\mathcal{I}_{\ell_D}(\hat \theta)| +2 \ell_{D}(\hat \theta ) + 3 \log \pi(\hat \theta) \right]\\     &= \left| \sqrt[3]{\frac{3}{2} (1- \alpha)} \right| \exp \left[ - q \, \log\left(\frac{2\pi}{n}\right) + \frac{1}{3} \log|\mathcal{I}(\hat \theta)| + \frac{1}{3} \log|\mathcal{I}_{\ell_D}(\hat \theta)| - \frac{4}{3} \ell_{D}(\hat \theta ) - 2 \log \pi(\hat \theta) \right]  \\ 
    & \cdot  \exp \left[  \frac{3q}{2} \, \log\left(\frac{2\pi}{n}\right) - \frac{1}{2} \log|\mathcal{I}(\hat \theta)| -  \log|\mathcal{I}_{\ell_D}(\hat \theta)| +2 \ell_{D}(\hat \theta ) + 3 \log \pi(\hat \theta) \right]\\
    &= \sqrt[3]{\frac{9}{4}} (1- \alpha)^{\frac{2}{3}} \exp \left[ - \frac{q}{2}  \, \log\left(\frac{2\pi}{n}\right) - \frac{1}{6} \log|\mathcal{I}(\hat \theta)| - \frac{2}{3} \log|\mathcal{I}_{\ell_D}(\hat \theta)| + \frac{2}{3} \ell_{D}(\hat \theta ) + \log \pi(\hat \theta) \right].
\end{align*}

Now that we found $\hat p^\alpha$, we focus on finding $\hat{p}_\mathcal{L}^\alpha$. We have

\begin{align*} 
  \log p( \hat y_{n+1} \mid x_{n+1}, D)  &\geq \log (\hat p^\alpha) \iff\\
     \tilde \ell(\tilde \theta)  &\geq \log (\hat p^\alpha) - \frac{3q}{2} \, \log\left(\frac{2\pi}{n}\right) - \log \pi(\tilde \theta) + \frac{1}{2} \log|\mathcal{I}(\tilde \theta)|\\
  &+ \log|\mathcal{I}_{\ell_D}(\hat \theta)| - 2 \ell_{D}(\hat \theta ) - 2 \log \pi(\hat \theta) \iff\\
     \log \big\lvert y - f(\tilde \theta, x) \big\rvert_2  &\geq \log (\hat p^\alpha) - \frac{3q}{2} \, \log\left(\frac{2\pi}{n}\right) - \log \pi(\tilde \theta) + \frac{1}{2} \log|\mathcal{I}(\tilde \theta)|\\
  &+ \log|\mathcal{I}_{\ell_D}(\hat \theta)| - 2 \ell_{D}(\hat \theta ) - 2 \log \pi(\hat \theta).
\end{align*}

By Lemma \ref{lemma-approx}, this entails that

\begin{align*} 
\log \big\lvert y - f(\hat \theta, x) &\big\rvert_2 \geq \log (\hat p^\alpha) - \frac{3q}{2} \, \log\left(\frac{2\pi}{n}\right) - \log \pi(\hat \theta) + \frac{1}{2} \log|\mathcal{I}(\hat \theta)|\\
&+ \log|\mathcal{I}_{\ell_D}(\hat \theta)| - 2 \ell_{D}(\hat \theta ) - 2 \log \pi(\hat \theta)\\
&= \log (\hat p^\alpha) - \frac{3q}{2} \, \log\left(\frac{2\pi}{n}\right) + \frac{1}{2} \log|\mathcal{I}(\hat \theta)| + \log|\mathcal{I}_{\ell_D}(\hat \theta)| - 2 \ell_{D}(\hat \theta ) - 3 \log \pi(\hat \theta).
\end{align*}

Notice that thanks to our choice of the loss, we have that both Fisher information matrices $\mathcal I(\tilde \theta)$ and $\mathcal I_{\ell_D}(\hat \theta)$ do not depend on $\hat y_{n+1}$. Similarly, the normalization term $\frac{3q}{2} \, \log(\frac{2\pi}{n})$ and the prior values $\log \pi(\hat \theta)$ and $- 2 \log \pi(\hat \theta)$ do not depend on $\hat y_{n+1}$.\footnote{Neither directly through $y$ nor indirectly through $\tilde \theta = \argmax_\theta \tilde \ell(\theta) $.} The Highest Density Region (HDR), then, is given by

\begin{equation}\label{hdr_computed}
\begin{split}
R(\hat{p}^\alpha)=\bigg\{y &\in \mathcal{Y} : \big\lvert y - f(\hat \theta, x) \big\rvert_2  \\
&\geq\hat p^\alpha \exp\bigg[ \underbrace{- \frac{3q}{2} \, \log\left(\frac{2\pi}{n}\right) + \frac{1}{2} \log|\mathcal{I}(\hat \theta)| + \log|\mathcal{I}_{\ell_D}(\hat \theta)| - 2 \ell_{D}(\hat \theta )}_{=:\mathfrak{c}} - 3 \log \pi(\hat \theta)\bigg] \bigg\},
\end{split}
\end{equation}
where we group in $\mathfrak{c}$ constants that do not depend on the prior. We can rewrite the expression for $\hat{p}^\alpha$ we derived before as

\begin{align*}
       \hat p^\alpha = \sqrt[3]{\frac{9}{4}} (1- \alpha)^{\frac{2}{3}} \exp \left[ \frac{\mathfrak{c}}{3} + \log \pi(\hat \theta) \right].
\end{align*}

Plugging this value in \eqref{hdr_computed}, we obtain $\hat p_{\mathcal{L}}^\alpha$,

\begin{equation*} 
\begin{split}
\hat p_{\mathcal{L}}^\alpha &= \hat p^\alpha \exp\left[ \mathfrak c - 3 \log \pi(\hat \theta)\right] = \sqrt[3]{\frac{9}{4}} (1- \alpha)^{\frac{2}{3}} \exp \left[ \frac{\mathfrak{c}}{3} + \mathfrak c - 3 \log \pi(\hat \theta) + \log \pi(\hat \theta)\right]\\
&= \sqrt[3]{\frac{9}{4}} (1- \alpha)^{\frac{2}{3}} \exp \left[ \frac{4 \mathfrak{c}}{3} -2 \log \pi(\hat \theta) \right],
\end{split}
\end{equation*}

which concludes the argument. The HDR, then, is

\begin{align*} 
R(\hat{p}^\alpha)=\left\lbrace{y \in \mathcal{Y} : \big\lvert y - f(\hat \theta, x) \big\rvert_2 \geq  \sqrt[3]{\frac{9}{4}} (1- \alpha)^{\frac{2}{3}} \exp \left[ \frac{4 \mathfrak{c}}{3} -2 \log \pi(\hat \theta) \right] }\right\rbrace
\end{align*}

or, equivalently,

\begin{align*} 
R(\hat{p}^\alpha)=\left\lbrace{y \in \mathcal{Y} : \big\lvert y - f(\hat \theta, x) \big\rvert_2 \geq  \sqrt[3]{\frac{9}{4} (1- \alpha)^{2}} \exp\left[ \frac{4 \mathfrak{c}}{3}\right] \pi(\hat \theta)^{-2}  }\right\rbrace.
\end{align*}



\section{Further Details on Experiments}\label{app:experiments}

We evaluate our method across synthetic and real-world benchmarks, comparing against strong probabilistic and deterministic baselines. Metrics cover both accuracy and uncertainty quality, including negative log-likelihood, Brier score, expected calibration error, and out-of-distribution (OOD) detection (via AUROC).\footnote{For a modern treatment of OOD via semantic information, we refer the interested reader to \citet{kaur2023usingsemanticinformationdefining}.} We ablate the contribution of each component of the approach and study sensitivity to hyperparameters and model scale. Robustness is assessed under distribution shift and label noise, and we report computational overhead (training/inference time and memory) relative to baselines. Overall, the method consistently matches or exceeds baseline accuracy while delivering better-calibrated uncertainties and competitive OOD performance at modest additional cost. Code to reproduce the experimental setup can be found at \url{https://github.com/rodemann/ssla}.

\subsection{Conjugate Prior Analysis} 
Conjugate Prior Analysis allows us to compare (A)SSLAs performance against the analytic PPD.  \\

In the following, we briefly state each of the models and provide the experimental insights.

\subsubsection{Normal-Normal Model}
Consider a set of observations $X_1, \ldots, X_n \sim \mathcal{N}(\mu, \sigma_{\text{true}}^2)$ where $\sigma_{true}^2$ is assumed to be known. 
The prior for the mean parameter $\mu$ is defined as $\mu \sim \mathcal{N}(\mu_0, \tau_0^2)$, where $\mu_0 \; \text{and}\; \tau_0^2$ are also known. \\
The posterior predictive distribution for a new observation $X_{new}$, given $X$, is then defined as 
\begin{equation} \label{eq:normal_normal_ppd}
    X_{new}|X \sim \mathcal{N}(\mu_n, \sigma_n^2 + \sigma_{true}^2)
\end{equation}
where 
\begin{equation}
    \sigma_n^2 = \left( \frac{n}{\sigma_{true}^2} + \frac{1}{\tau_0^2} \right)^{-1} ; \quad \mu_n = \left( \frac{\mu_0}{\tau_0^2} + \frac{\sum X_i}{\sigma_{true}^2} \right)\sigma_n^2
\end{equation}

In our experiment, we employ a true data-generating process that assumes parameter values of $\mu_{true} = 4.0$ and $\sigma_{true} = \sqrt{2.0}$, and the prior distribution is specified as $\mathcal{N}(\mu_0=4.0, \tau_0^2=1.0)$. Observations are sampled from $\mathcal{N}(\mu_{true}, \sigma_{true}^2)$, and the posterior predictive distribution is computed analytically according to Equation \ref{eq:normal_normal_ppd}. 

For SSLA and ASSLA, the log-likelihood and observed Fisher information matrix play a critical role in the approximations. Under our assumptions, the observations $X_i \sim \mathcal{N}(\mu_{true}, \sigma_{true}^2)$ for $i \in \{1, \ldots, n\}$ are independent and identically distributed. The log-likelihood is given by:
\begin{equation}
    \begin{aligned}
        \ell(\mu) &= \log \mathcal{L}(\mu) = \log \prod_{i=1}^n f(X_i|\mu) = \sum_{i=1}^n \log f(X_i|\mu) \\
        &= -\frac{n}{2} \log(2 \pi \sigma^2) - \frac{1}{2 \sigma^2} \sum_{i=1}^n(X_i - \mu)^2 
    \end{aligned}
\end{equation}
The observed Fisher information is defined as the negative second derivative of the log-likelihood:
\begin{equation}
    \begin{aligned}
        \mathcal{J}(\mu) &= - \pdv{\ell(\mu)}{\mu,\mu} =  - \pdv{}{\mu} \frac{1}{\sigma^2} \sum_{i=1}^n(X_i - \mu) = \frac{n}{\sigma^2} 
    \end{aligned}
\end{equation}

Figure \ref{fig:normal_normal_model} provides a comparison of the approximation quality of SSLA and ASSLA across various sample sizes ($n=20, \ldots, n=1.000.000$). For most cases ($n=20, \ldots, n=100.000$), both SSLA and ASSLA closely match the analytic posterior predictive distribution. This is further validated by the measured entropy between each approximation method and the analytic posterior predictive distribution, which remains close to zero across all cases. \footnote{A low entropy score indicates that the approximations do not introduce significant information loss relative to the analytic solution. An entropy of $0$ tells us that we can use either technique interchangeable.} These results confirm the ability of both methods to reconstruct the true distribution effectively.

\subsubsection{Poisson-Gamma Model}
Let $X_1, \ldots, X_N \overset{\text{iid}}{\sim} Poi(\lambda)$, $\lambda \sim Gamma(\alpha, \beta)$.
Then $\lambda|x \sim Gamma(\sum X_i + \alpha, n + \beta)$.
The posterior predictive distribution for a new observation $X_{new}$ follows a Negative Binomial distribution:
\[
    X_{new}|X \sim NegBin\left(r=\sum X_i + \alpha,p= \frac{n + \beta}{n + \beta + 1}\right)
\]
with the definition of r being the number of successes, p being the success probability.

For the model setup, we assume a true data generating process of $X_i \sim Poi(\lambda_{true})$ for $i \in \{1, \ldots, n\}$ with a true rate of $\lambda_{true} = 3.0$. \\
The prior distribution is specified as a Gamma distribution $\lambda \sim Gamma(\alpha=6.0, \beta=2.0)$ where $\alpha$ and $\beta$ correspond to the prior’s shape and rate parameters, respectively. \\

We again derive the log-likelihood and the observed Fisher Information which are given by
\begin{equation}
    \ell(\lambda) = \sum \left( X_i \log(\lambda) - \lambda \right) - \sum \log(X_i!)
\end{equation}
and
\begin{equation}
    \mathcal J(\lambda) = \frac{\sum X_i}{\lambda^2}
\end{equation}
respectively. \\

The experiment evaluates the approximation quality of SSLA and ASSLA across different sample sizes ($n=20, \ldots, n=1.000.000$), see Figure~\ref{fig:poi_gamma_model}. Observations are sampled from the true Poisson distribution, and the posterior predictive density for a new observation $X_{new}$ is computed using the three methods, (a) SSLA, (b) ASSLA, (c) Analytically. \\

\begin{figure}
    \centering
    \includegraphics[width=\linewidth]{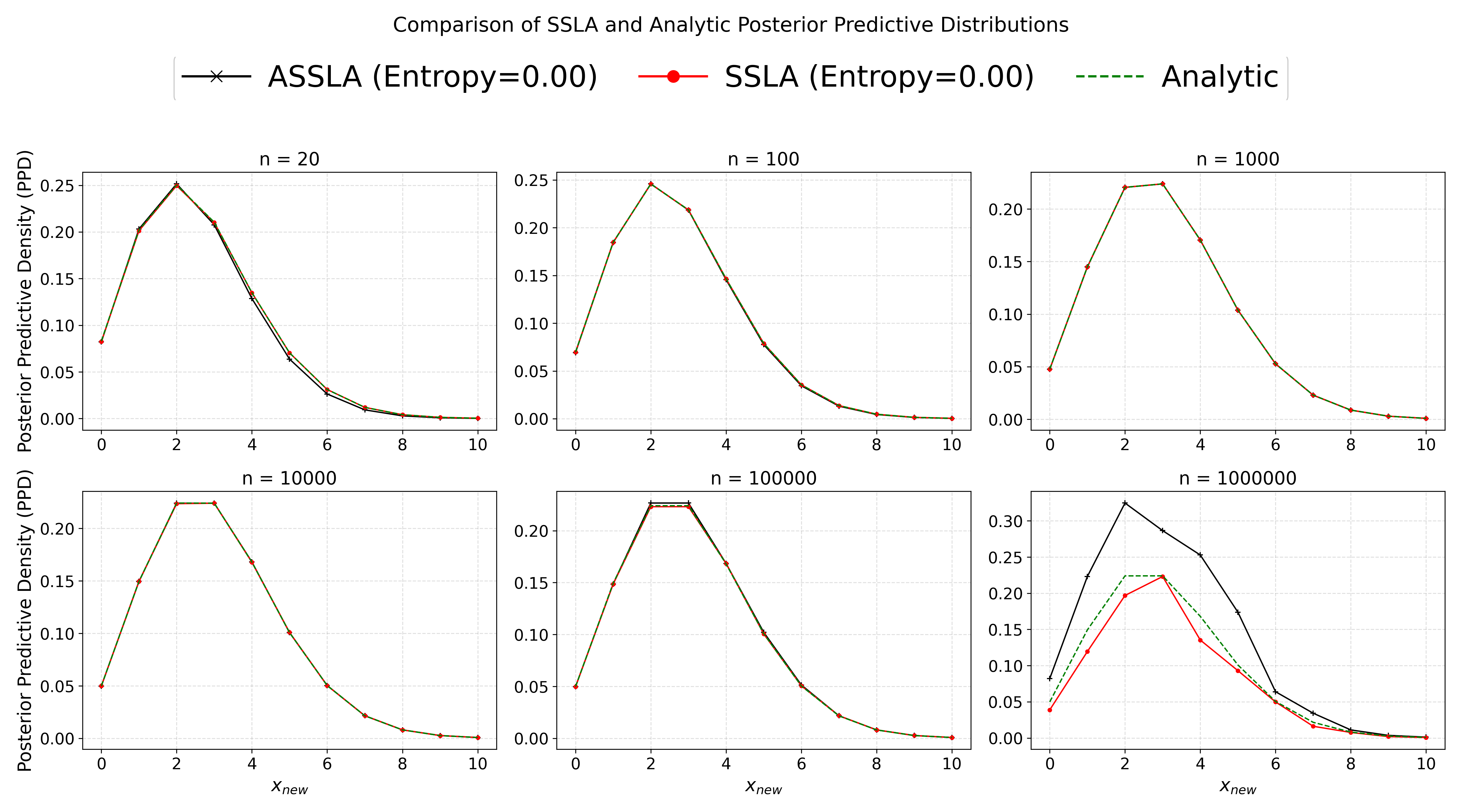}
    \caption[Poi-Gamma Model: Comparison of the PPD]{Illustrated are the analytic (green) and approximated (SSLA - red, ASSLA - black) PPDs for varying sample sizes $n$, ranging from $n=20$ to $n=1.000.000$. The results indicate close alignment for smaller and moderate data sizes, while deviations for larger sample sizes ($n=1.000.000$) become noticeable for ASSLA.}
    \label{fig:poi_gamma_model}
\end{figure}

\section{Heteroscedastic Regression: Further insights} \label{app:het_reg}

\subsection{Architectural Design} 
We use a Multilayer Perceptron (MLP) \citep{bishop2006pattern, peng2017multilayerperceptronalgebra} as the base model due to its flexibility and effectiveness in modeling non-linear relationships. \\

The network architecture is formally defined as: 
\begin{equation} 
    \hat \mu (x) = f_\theta(x) = W_n \cdot \text{act} \left( W_{n-1} \cdot \text{act} (\ldots (W_1 x + b_1)) + b_{n-1} \right) + b_n 
\end{equation} 
for standard regression tasks, and as: 
\begin{equation} 
    \begin{bmatrix} 
        \hat \mu (x) \\ \log \hat \sigma^2(x) 
    \end{bmatrix} 
    = f_\theta(x) = W_n \cdot \text{act} \left( W_{n-1} \cdot \text{act} (\ldots (W_1 x + b_1)) + b_{n-1} \right) + b_n 
\end{equation} 
for modeling heteroscedastic uncertainty, where the output consists of both the predicted mean and the logarithm of the variance. \\

We employ two hidden layers with $50$ neurons each and ReLU activation for modeling non-linear relationships. 

\subsection{Hyperparameter Optimization}
To optimize the architectural design of the network, we use \texttt{Optuna} \citep{akiba2019optunanextgenerationhyperparameteroptimization}. Optuna is a flexible hyperparameter optimization framework that dynamically constructs the search space and leverages Bayesian Optimization (BO) \citep{frazier2018tutorialbayesianoptimization}, specifically using the Tree-structured Parzen Estimator (TPE) \citep{NIPS201186e8f7ab} approach for single-objective optimization. \\

The following parameters were optimized based on validation loss: 
\begin{itemize}
    \item Learning Rate: $10^{-5}, \ldots, 10^{-1}$
    \item Batch Size: $32, \ldots, 256$
\end{itemize}
The search was conducted for 100 trials, after which the best hyperparameters were selected.

\paragraph{HMC - Hyperparameter Tuning} For HMC we tune the following hyperparameters:
\begin{itemize}
    \item Step Size
    \begin{itemize}
        \item Description: The size of steps taken during the simulation of the Hamiltonian dynamics. It relates to the exploration of the sampler \citep{cobb2023hamiltorch}
        \item Search Space: $10^{-5}, \ldots, 10^{-3}$
    \end{itemize}
    \item Number of Samples
    \begin{itemize}
        \item Description: This reflects the total number of samples generated from the posterior distribution
        \item Search Space: $500, \ldots , 2000$
    \end{itemize}
    \item Number of Steps per Sample
    \begin{itemize}
        \item Description: The number of leapfrog steps taken in the Leapfrog integration technique. For further information on leapfrog integration, used in HMC we refer the reader to \citet{pourzanjani2019implicithamiltonianmontecarlo}.
        \item Search Space: $10, \ldots, 50$
    \end{itemize}
    \item Tau In
    \begin{itemize}
        \item Description: This parameter reflects the precision the prior \citep{cobb2023hamiltorch}
        \item Search Space: $0.1, \ldots, 10$
    \end{itemize}
    \item Tau Out
    \begin{itemize}
        \item Description: This parameter reflects likelihood output precision \citep{cobb2023hamiltorch}
        \item Search Space: $10, \ldots, 500$
    \end{itemize}
    \item Mass
    \begin{itemize}
        \item Description: In HMC the mass is typically encoded as a mass matrix \citep{betancourt2018conceptualintroductionhamiltonianmonte} and directly impacts the step size and trajectory of the leapfrog integrator. In hamiltorch, a diagonal matrix with scaling factor (i.e. the \texttt{mass} parameter) is used \citep{cobb2023hamiltorch}
        \item Search Space: $0.1, \ldots, 10$
    \end{itemize}
\end{itemize}

\paragraph{BNN-MFVI} For the BNN with Mean-Field Variational Inference approximation the following hyperparameters are tuned

\begin{itemize}
    \item  Burnin Epochs
    \begin{itemize}
        \item Description: Represents the number of epochs to train before switching to NLL loss 
        \item Search Space: $50, \ldots, 200$
    \end{itemize}
    \item Number of MC Samples During Train
    \begin{itemize}
        \item Description: The number of MC Samples to draw during training when computing the negative ELBO loss
        \item Search Space: $5, \ldots, 50$  
    \end{itemize}
    \item Number of MC Samples During Test
    \begin{itemize}
        \item Description: The number of MC Samples to draw during test and prediction
        \item Search Space: $10, \ldots, 100$
    \end{itemize}
    \item Output Noise Scale
    \begin{itemize}
        \item Description: The scale of the predicted sigmas
        \item Search Space: $0.5, \ldots, 2$
    \end{itemize}
\end{itemize}

\paragraph{BNN-VI} The BNN-VI model is proposed in \citep{pmlr-v80-depeweg18a} and requires next to the hyperparameters for the MLP architecture and the hyperparameters of MFVI additionally:
\begin{itemize}
    \item Alpha
    \begin{itemize}
        \item Description: This parameter is used to minimize the ($\alpha$-) divergence \citep[see e.g.][]{minka2005divergence} between the variational and the analytic posterior \citep{pmlr-v80-depeweg18a}.
        \item Search Space: $0, \ldots, 1$
    \end{itemize}
\end{itemize}

\paragraph{LA - Hyperparameter Tuning}
The Laplace class provided by \citet{daxberger2021laplace} enables automatic post-hoc tuning of the prior precision using the marginal likelihood method \citep{immer2021scalablemarginallikelihoodestimation}, as described by \citet[][Chapter 3]{daxberger2021laplace}. We follow their recommendation \citep[][Chapter 4.1]{daxberger2021laplace} in applying post-hoc tuning rather than online training.

\subsection{Loss Functions} Depending on the regression task and the UQ method, we employ different loss functions:
\begin{itemize}
    \item Mean Squared Error (MSE) Loss: Used for deterministic regression models
    \[
        \mathcal{L}_{MSE} = \frac{1}{n} \sum_{i=1}^{n} (y_i - \mu(x_i))^2
    \]
    \item Negative Log-Likelihood (NLL): Used for models predicting both mean and variance
    \[
        \mathcal{L}_{NLL} = \frac{1}{2} \sum_{i=1}^n \left(\log (\sigma(x_i)^2) + \frac{(y_i - \mu(x_i))^2}{\sigma(x_i)^2} \right)
    \]
\end{itemize}

\subsection{Training Configuration}

\begin{itemize}
    \item Optimizer: Adam with a learning rate according to the found learning rate hyperparameter
    \item Training Epochs: We used 500 epochs of training with early stopping after 20 epochs of no improvement in the validation loss
    \item Batch Size: The batch size is adapted according to the found hyperparameter
    \item Monte Carlo Samples: For (A)SSLA we used 20 samples to reconstruct the ppd of each observation, varying the response $y$ by small margins around the true predicted $\hat y$
    \item We assume a standard normal distributed prior where applicable
\end{itemize}

\section{Additional Experiments}\label{add-exper}

Tables~\ref{tab:luq_sine_homo} and \ref{tab:luq_sine_hetero} have some additional results on the Lightning UQ benchmarks \citep{Lehmann_Lightning_UQ_Box_2024}. They comprise the Sine RBF (Radial basis function) setup and compare the homoscedastic case to the heteroscedastic RBF regression case. As expected from the theory, SSLA and ASSLA coincide with classical Laplace-style behavior when refitting effects are negligible, thereby validating the asymptotic approximations underlying ASSLA.

\begin{table}[h]
\centering
\small
\begin{tabular}{lcccc}
\toprule
Method & RMSE $\downarrow$ & NLL $\downarrow$ & Cov90 $\uparrow$ & Cov95 $\uparrow$ \\
\midrule
Deterministic (plugin) & $0.310 \pm 0.042$ & $0.316 \pm 0.234$ & $82.0 \pm 9.9$ & $91.0 \pm 9.9$ \\
Laplace/Exact & $0.310 \pm 0.042$ & $0.307 \pm 0.219$ & $85.5 \pm 9.2$ & $92.0 \pm 9.9$ \\
ASSLA & $0.310 \pm 0.042$ & $0.316 \pm 0.235$ & $82.0 \pm 9.9$ & $91.0 \pm 9.9$ \\
SSLA & $0.310 \pm 0.042$ & $0.307 \pm 0.219$ & $85.5 \pm 9.2$ & $92.0 \pm 9.9$ \\
\bottomrule
\end{tabular}
\caption{Lightning UQ Box benchmark: Sine (homoscedastic) --- RBF regression.}
\label{tab:luq_sine_homo}
\end{table}

\begin{table}[h]
\centering
\small
\begin{tabular}{lcccc}
\toprule
Method & RMSE $\downarrow$ & NLL $\downarrow$ & Cov90 $\uparrow$ & Cov95 $\uparrow$ \\
\midrule
Deterministic (plugin) & $0.346 \pm 0.034$ & $0.420 \pm 0.182$ & $87.0 \pm 4.2$ & $89.5 \pm 4.9$ \\
Laplace/Exact & $0.346 \pm 0.034$ & $0.413 \pm 0.167$ & $87.5 \pm 4.9$ & $90.0 \pm 4.2$ \\
ASSLA & $0.346 \pm 0.034$ & $0.420 \pm 0.182$ & $87.0 \pm 4.2$ & $89.5 \pm 4.9$ \\
SSLA & $0.346 \pm 0.034$ & $0.413 \pm 0.167$ & $87.5 \pm 4.9$ & $90.0 \pm 4.2$ \\
\bottomrule
\end{tabular}
\caption{Lightning UQ Box benchmark: Sine (heteroscedastic) --- RBF regression.}
\label{tab:luq_sine_hetero}
\end{table}

\section{Illustration of prior modularity:}
\label{app:prior_modularity}
We illustrate the prior modularity on a simple 1D linear regression
toy problem with $n=50$ and Gaussian noise $\sigma=0.5$.
We evaluate SSLA/ASSLA at $x_{\mathrm{test}}=1.5$.
For each prior $w\sim\mathcal{N}(0,\tau^2)$ we compute $\hat\theta$ and the self-prediction $\hat y_\star=f_{\hat\theta}(x_{\mathrm{test}})$, then evaluate the log-PPD approximation at (A) $y=\hat y_\star$ and (B) $y=\hat y_\star+1.0$.
We include a near-noninformative prior $\tau^2=10^6$.

\begin{table}[h]
\centering
\small
\begin{tabular}{lcc}
\toprule
Prior & SSLA & ASSLA \\
\midrule
$\mathcal{N}(0,0.01)$   & $-0.0259$ & $-0.0259$ \\
$\mathcal{N}(0,0.1)$    & $-0.0259$ & $-0.0259$ \\
$\mathcal{N}(0,1)$      & $-0.0259$ & $-0.0259$ \\
$\mathcal{N}(0,10)$     & $-0.0259$ & $-0.0259$ \\
$\mathcal{N}(0,10^{6})$ & $-0.0259$ & $-0.0259$ \\
\bottomrule
\end{tabular}
\caption{At the self-prediction $y=\hat y_\star$, SSLA and ASSLA coincide (up to constants), since the pseudo-observation induces no predictive ``surprise'' at $\hat y_\star$.}
\label{tab:prior_mod_self}
\end{table}

\begin{table}[h]
\centering
\small
\begin{tabular}{lcccc}
\toprule
Prior & SSLA & ASSLA & SSLA--ASSLA & SSLA--SSLA$_{\mathrm{noninfo}}$ \\
\midrule
$\mathcal{N}(0,0.01)$   & $-1.9612$ & $-2.0259$ & $0.0647$ & $-0.0363$ \\
$\mathcal{N}(0,0.1)$    & $-1.9303$ & $-2.0259$ & $0.0956$ & $-0.0054$ \\
$\mathcal{N}(0,1)$      & $-1.9255$ & $-2.0259$ & $0.1004$ & $-0.0006$ \\
$\mathcal{N}(0,10)$     & $-1.9250$ & $-2.0259$ & $0.1009$ & $-0.0001$ \\
$\mathcal{N}(0,10^{6})$ & $-1.9250$ & $-2.0259$ & $0.1009$ & $0.0000$ \\
\bottomrule
\end{tabular}
\caption{Off the self-prediction $y=\hat y_\star+1.0$, tightening the prior (smaller variance) changes SSLA, while ASSLA is (by construction) much less sensitive at the approximation order where the prior increment is dropped.}
\label{tab:prior_mod_off}
\end{table}

\bibliography{main}
\bibliographystyle{tmlr}

\end{document}